\newcommand{\iclr}[1]{\iftoggle{iclr}{#1}{}}
\newcommand{\arxiv}[1]{\iftoggle{iclr}{}{#1}}
\newcommand{\loose}{\looseness=-1}
\newcommand{\neutralize}[1]{\expandafter\let\csname c@#1\endcsname\count@}
\declaretheorem[name=Theorem,parent=section]{theorem}
\declaretheorem[name=Lemma,parent=section]{lemma}
\declaretheorem[name=Assumption, parent=section]{assumption}
\declaretheorem[name=Definition, parent=section]{definition}
\declaretheorem[name=Condition, parent=section]{condition}
\declaretheorem[name=Corollary, parent=section]{corollary}
\declaretheorem[name=Remark, parent=section]{remark}
\declaretheorem[name=Proposition, parent=section]{proposition}
\newcommand{\pfref}[1]{Proof of \cref{#1}}
\renewcommand{\eqref}[1]{\texorpdfstring{\hyperref[#1]{(\ref*{#1})}}{(\ref*{#1})}}
\Crefname{assumption}{Assumption}{Assumptions}
\crefname{fact}{Fact}{Facts}
    \let\Cref\crtCref
    \let\cref\crtcref
\renewenvironment{proof}[1][Proof]%
{%
  \par\noindent{\bfseries\upshape {#1.}\ }%
}%
{\qed\newline}
\xpatchcmd{\proof}{\itshape}{\normalfont\proofnameformat}{}{}
\newcommand{\proofnameformat}{\bfseries}
\newtcolorbox{mainbox}[1][]{
  colframe=blue!40!black,
  colback=blue!2!white,
  enhanced,
  attach boxed title to top text left={yshift=-2mm},
  boxed title style={size=small,colframe=blue!40!black,colback=blue!40!black} 
  title={\textbf{#1}}
}
\newtcolorbox{minbox}[1]{
  colframe=blue!10!black,
  colbacktitle=blue!50!black!30!white,
  colback=blue!2!white,
  enhanced,
  fonttitle=\bfseries,
}
\newcommand{\multiline}[1]{\parbox[t]{\dimexpr\linewidth-\algorithmicindent}{#1}}
\let\OldStatex\Statex
\renewcommand{\Statex}[1][3]{%
  \setlength\@tempdima{\algorithmicindent}%
  \OldStatex\hskip\dimexpr#1\@tempdima\relax}
\setlist[enumerate]{leftmargin=*}
\setlist[itemize]{leftmargin=*}
\DeclarePairedDelimiter{\brk}{[}{]}
\DeclarePairedDelimiter{\crl}{\{}{\}}
\DeclareMathOperator{\En}{\mathbb{E}}
\DeclareMathOperator*{\argmax}{arg\,max}
\def\ddefloop#1{\ifx\ddefloop#1\else\ddef{#1}\expandafter\ddefloop\fi}
\def\ddef#1{\expandafter\def\csname bb#1\endcsname{\ensuremath{\mathbb{#1}}}}
\def\ddefloop#1{\ifx\ddefloop#1\else\ddef{#1}\expandafter\ddefloop\fi}
\def\ddef#1{\expandafter\def\csname b#1\endcsname{\ensuremath{\mathbf{#1}}}}
\def\ddef#1{\expandafter\def\csname sf#1\endcsname{\ensuremath{\mathsf{#1}}}}
\def\ddef#1{\expandafter\def\csname c#1\endcsname{\ensuremath{\mathcal{#1}}}}
\def\ddef#1{\expandafter\def\csname h#1\endcsname{\ensuremath{\widehat{#1}}}}
\def\ddef#1{\expandafter\def\csname hc#1\endcsname{\ensuremath{\widehat{\mathcal{#1}}}}}
\def\ddef#1{\expandafter\def\csname t#1\endcsname{\ensuremath{\widetilde{#1}}}}
\def\ddef#1{\expandafter\def\csname tc#1\endcsname{\ensuremath{\widetilde{\mathcal{#1}}}}}
\def\ddefloop#1{\ifx\ddefloop#1\else\ddef{#1}\expandafter\ddefloop\fi}
\def\ddef#1{\expandafter\def\csname scr#1\endcsname{\ensuremath{\mathscr{#1}}}}
\newcommand{\ldef}{\vcentcolon=}
\newcommand{\repexp}{\texttt{RepExp}\xspace}
\newcommand{\passatk}[1][N]{pass@$#1$\xspace}
\renewcommand{\div}{\texttt{div}}
\newcommand{\pirefh}[1][h]{\pi_{h,\texttt{ref}}}
\newcommand{\rstar}{r^{\star}}
\newcommand{\xpo}{\texttt{XPO}\xspace}
\newcommand{\M}[1]{^{{\scriptscriptstyle M}}}  %
\newcommand{\trn}{\top}
\newcommand{\bigoh}{O}
\def\multiset#1#2{\ensuremath{\left(\kern-.3em\left(\genfrac{}{}{0pt}{}{#1}{#2}\right)\kern-.3em\right)}}
\newcommand{\iidsim}{\overset{\textrm{i.i.d.}}{\sim}}
\newcommand{\Alg}{\mathtt{Alg}}
\newcommand{\inner}[2]{\langle #1,#2\rangle}
\def\eqref#1{equation~\ref{#1}}
\def\1{\bm{1}}
\DeclareMathAlphabet{\mathsfit}{\encodingdefault}{\sfdefault}{m}{sl}
\SetMathAlphabet{\mathsfit}{bold}{\encodingdefault}{\sfdefault}{bx}{n}
 \newcommand{\dfc}[1]{}
 \newcommand{\jta}[1]{}
 \newcommand{\akc}[1]{}
 \newcommand{\jt}[1]{}
\let\oldparagraph\paragraph
\renewcommand{\paragraph}[1]{\oldparagraph{#1.}}
\newcommand{\textbfc}[1]{\textbf{\textcolor{blue!40!black}{#1}}}
\title{
\mbox{Representation-Based Exploration for Language} Models:
 From Test-Time to Post-Training
}
\title{
Representation-Based Exploration for Language Models:\\ From Test-Time to Post-Training
}
 \author{Jens Tuyls\textsuperscript{1,}\thanks{Work partially completed during an internship at Microsoft Research.} \quad Dylan J. Foster\textsuperscript{2} \quad Akshay Krishnamurthy\textsuperscript{2} \quad Jordan T. Ash\textsuperscript{2} \\
\textsuperscript{1}Princeton University \quad \textsuperscript{2}Microsoft Research NYC \\
\texttt{jtuyls@cs.princeton.edu}, \texttt{\{dylanfoster,akshaykr,ash.jordan\}@microsoft.com}
}
 \author{Jens Tuyls$^{1,}$\thanks{Work partially completed during an internship at Microsoft Research.} \quad Dylan J. Foster$^{2}$ \quad Akshay Krishnamurthy$^{2}$ \quad Jordan T. Ash$^{2}$ \vspace{0.5em}\\
 {\normalsize $^{1}$Princeton University \quad $^{2}$Microsoft Research NYC}
 }
\date{}
\begin{document}

\maketitle

\begin{abstract}
Reinforcement learning (RL) promises to expand the capabilities of language models, but it is unclear if current RL techniques promote the discovery of novel behaviors, or simply sharpen those already present in the base model.
In this paper, we investigate the value of deliberate exploration---explicitly incentivizing the model to discover novel and diverse behaviors---and aim to understand how the knowledge in pre-trained models can guide this search. Our main finding is that exploration with a simple, principled, 
\textbfc{representation-based} bonus derived from the pre-trained language model's hidden states significantly improves diversity and pass@k rates---both for post-training, and in a novel inference-time scaling setting we introduce. \loose

\begin{enumerate}
\item For inference-time, exploration with representation-based diversity
 improves efficiency, consistently improving pass@k rates across a variety of models and reasoning tasks. For example, for \texttt{Qwen-2.5-14b-Instruct} we obtain over 50\% improvement in verifier efficiency on almost all tasks.\loose
\item For post-training, we show that integrating this exploration strategy into an RL pipeline improves reasoning performance over that of the initial model and over standard RL post-training. For example, on \texttt{AIME 2024}, our post-trained \texttt{Qwen-2.5-7b-Instruct}'s pass@80 matches the pass@256 of GRPO on the same model, demonstrating a 3x improvement in test-time sample efficiency. 
\end{enumerate}
Overall, our findings suggest that deliberate exploration---with the right notion of diversity---is a practical path toward discovery of new behaviors beyond sharpening.\footnote{Website and code: \url{https://rep-exp.github.io}}\loose

\end{abstract}

\section{Introduction}
\label{sec:intro}

Reinforcement learning (RL) promises to endow agents with the ability to discover valuable
behaviors autonomously, via closed-loop trial and error. For language modeling
tasks with verifiable rewards, such as mathematical reasoning and code generation, post-training with reinforcement learning
has already enabled impressive breakthroughs
\citep{deepseek2025r1,openai2024o1}. Still, it is unclear whether contemporary RL implementations for language models attain the full promise of reinforcement learning. 
  Rather than unlocking capabilities not present in the pre-trained model, there is increasing evidence \citep{yue2025does,gandhi2025cognitive}
that existing RL recipes
\citep{schulman2017proximal,rafailov2024direct,deepseek2025r1} may
simply amplify or \emph{sharpen} \citep{huang2025self} behaviors that the base model can already execute, albeit with 
 modest probability. While this can
 be mitigated through deliberate data curation and some algorithmic interventions \citep{he2025rewarding,liu2025prorl,setlur2025e3}, data scale and quality are rapidly becoming bottlenecks, particularly in complex, open-ended domains where existing interventions fall short of eliciting desired behavior.

We argue that deliberate exploration---incentivizing the model to discover truly novel and diverse behavior---is an essential ingredient in realizing the full potential of RL for language model reasoning. 
Exploration has a rich history in both the theory and practice of RL, and exploration techniques tailored to deep networks
\citep{tang2017exploration,pathak2017curiosity,burda2018exploration,osband2019deep} have received extensive investigation in the context
of embodied decision making, including game playing and robotic control. %
These algorithms
proceed from scratch, without pre-training, yet rapidly learn complex behaviors, demonstrating that they enable learning beyond the sharpening regime.
If we can equip language models with exploration in a similar fashion, we may be able to advance reasoning capabilities without incurring exorbitant data curation costs. \loose %

In spite of the potential benefits of exploration, it is unclear which, if any, exploration technique from
deep RL can be scaled to modern language models. %
A central challenge involves the scalable quantification of novelty and behavior diversity---and acting 
on this information---when the decision space under consideration is the
combinatorially large space of language. At the same time, pre-trained language models contain tremendous prior knowledge compared to policies found in traditional embodied settings, which may be the key to guiding efficient exploration. This leads us to ask: \loose

\begin{minbox}{}
\begin{enumerate}
  \item Can the knowledge in pre-trained representations guide the search for novel behaviors?\loose \vspace{0.5em}
  \item Does deliberate exploration have the potential to move beyond sharpening the base model?\loose
  \end{enumerate}
\end{minbox}

\begin{figure}[t]
  \centering
  \includegraphics[width=\linewidth]{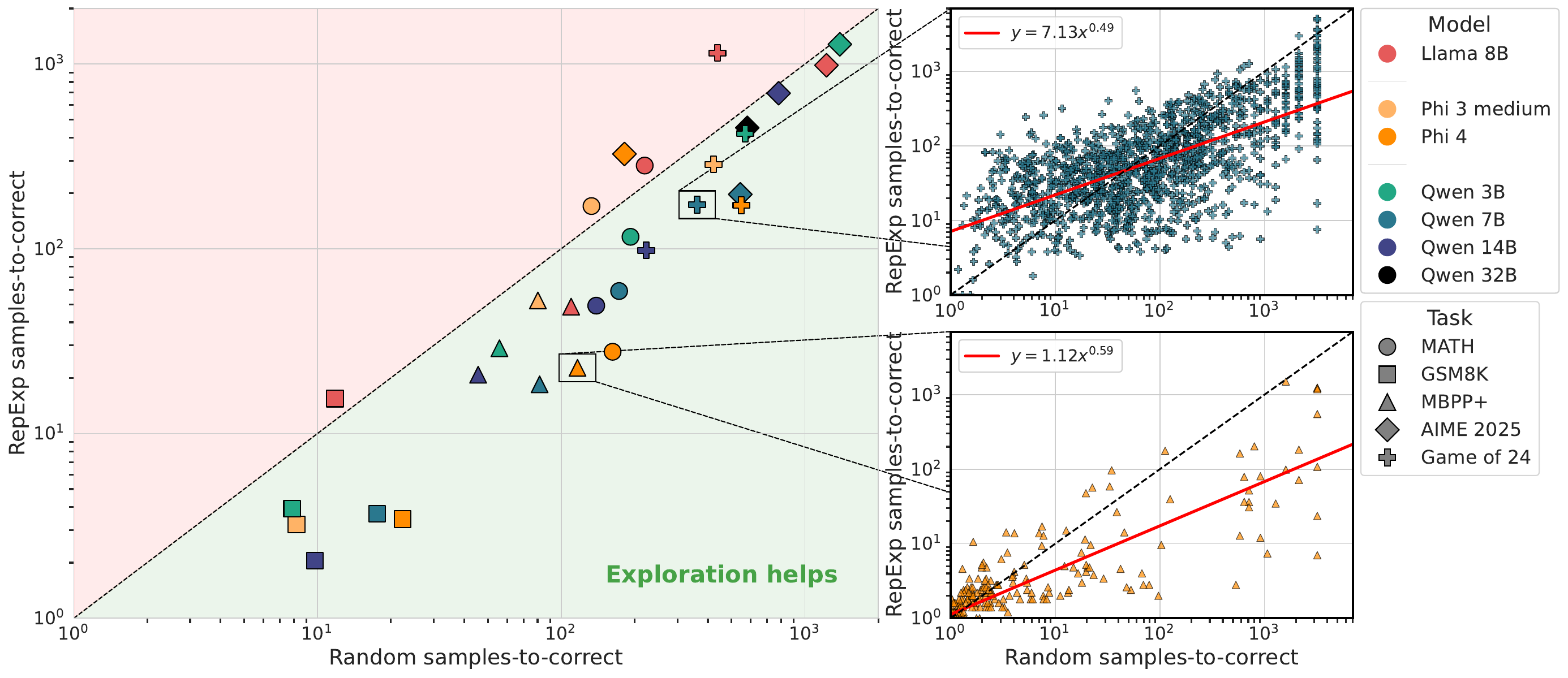}
  \iclr{\vspace{-0.6cm}}
  \arxiv{\vspace{-0.5cm}}
  \caption{\textbf{%
Representation-based inference-time exploration improves verifier efficiency.} \textbf{\emph{(Left)}} We plot the \emph{samples-to-correct}, the average number of samples until a correct response is selected, for a wide range of tasks and models. We compare two inference-time exploration methods: representation-based exploration (\cref{sec:methods}) and naive (random) sampling from the base model. \textbf{\emph{(Right)}} We display samples-to-correct, disaggregated to each question in the dataset, for two model-task pairs. \textbfc{We find representation-based exploration improves over random sampling for most model-task pairs.} For example, for \texttt{Qwen-2.5-14b-Instruct} we obtain over 50\% improvement in verifier efficiency on \texttt{GSM8K}, \texttt{MATH}, \texttt{MBPP+}, and \texttt{Game-of-24}. See~\cref{sec:coresets} for details.
\iclr{\vspace{-0.4cm}}
\arxiv{\vspace{-0.1cm}}
}%

  \label{fig:overview}
\end{figure}

\subsection{Contributions}
Toward answering these questions, we focus on understanding whether exploration with \emph{diversity bonuses} $\div(x,y)$ derived from a language model can effectively guide the search for diverse behaviors. We adopt a novel methodology (\cref{sec:preliminaries}) in which we first evaluate exploration in a simple, purely inference-time setting, then integrate our findings into post-training.\loose

\textbf{The inference-time selection problem (\cref{sec:preliminaries}).} In this setting, we aim to select a small set of responses $y_1,\ldots,y_{k}$ from a large set of candidates $y_1,\ldots,y_{N}$ for a given prompt $x$, such that the chosen set is as diverse as possible, and has high probability of including a positive response. This simple regime allows us to disentangle the role of diversity $\div(x,y)$ from other complex RL mechanisms, such as optimization and generalization.\loose

\textbf{Representation-based exploration improves diversity and efficiency.} Our main finding is that exploration with a \textbfc{representation-based} bonus (\cref{sec:methods}) derived from the pre-trained language model's hidden states significantly improves diversity and pass@k rates---both for our inference-time setting and for post-training. Our specific findings are as follows:
\loose
\begin{enumerate}
    \item \textbf{Inference-time (\cref{sec:inference_time_exploration}).}
    Inference-time exploration with representation-based diversity improves verifier efficiency. 
    For example, we obtain over 50\% improvement in verifier efficiency over standard sampling for \texttt{Qwen-2.5-14b-Instruct} on \texttt{GSM8K}, \texttt{MATH}, \texttt{MBPP+} and \texttt{Game-of-24}. 
    See~\cref{fig:overview} for an overview of our results for this setting. 
    \item \textbf{Post-training (\cref{sec:rl}).}
    Representation-based exploration can be incorporated into RL post-training,
    where its pass@$k$ performance is competitive with both GRPO and the base model \emph{uniformly for all $k$} (\cref{fig:rl_results}). 
    Notably, \textbfc{representation-based exploration completely eliminates the ``diversity collapse'' phenomenon where RL degrades pass@$k$ with respect to the base model for large $k$}~\citep{dang2025weight,yue2025does,wu2025invisible}. In addition, representation-based exploration induces responses that look much more novel under the base model (\cref{fig:log_prob_histogram_math}). 
\end{enumerate}
Our findings, particularly these last two points, suggest that deliberate exploration
is a practical path toward discovery of new behaviors beyond sharpening.
Although our experiments focus on arguably the simplest principled representation-based exploration scheme---for which we already see substantial performance improvements---we expect that our two-pronged evaluation approach will enable a deeper understanding of the benefits and tradeoffs of more sophisticated strategies, which may help realize the full potential of reinforcement learning for language model reasoning.

\paragraph{Diversity-guided generation (\cref{sec:guided})}
As a proof of concept, we also evaluate an inference-time exploration algorithm that uses representation-based diversity to encourage exploration \emph{during the
autoregressive generation process itself}. We find that this improves pass@k for large k over naive sampling for \texttt{Qwen-2.5-7b-Instruct} on \texttt{MATH}.

  \begin{figure}[t]
    \centering
        \includegraphics[width=\linewidth]{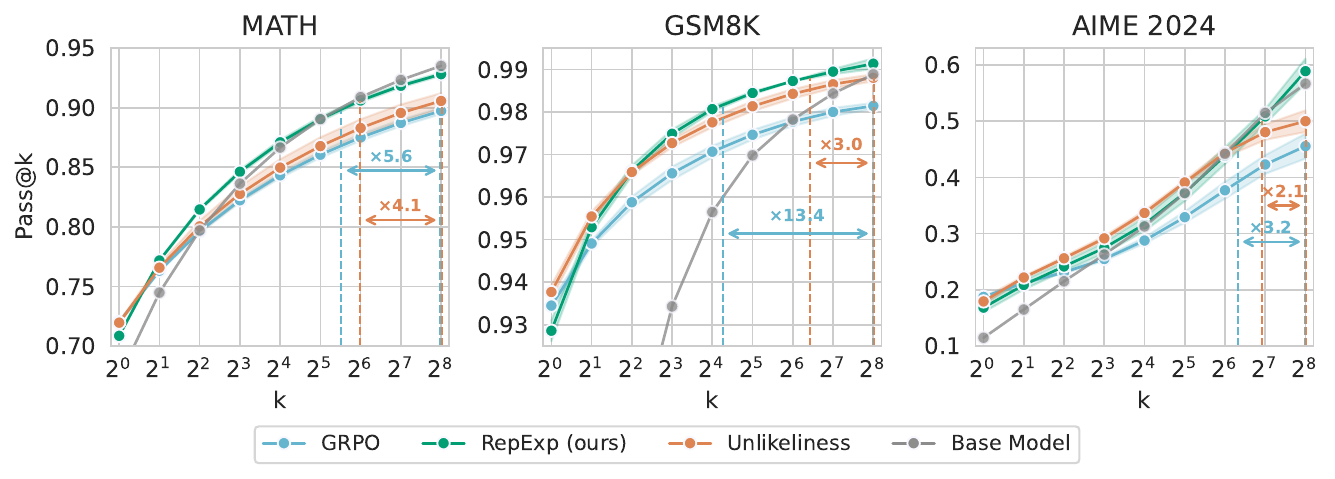}
    \vspace{-0.5cm}
    \caption{\textbf{Pass@k for RL post-training with exploration.} We find that RL generally increases the pass@k for small values of $k$ compared to the base model, but that exploration is required to improve or even preserve base model pass rates for larger values of $k$. For \texttt{MATH} and \texttt{GSM8K}, \repexp roughly matches or improves upon Unlikeliness for $k \ge 2$. For \texttt{AIME 2024}, \repexp is slightly worse than Unlikeliness until $k = 64$, after which it surpasses Unlikeliness for all larger values of $k$. 
    Shaded areas indicate one standard error.
    Horizontal arrows indicate the test-time sample efficiency improvement for pass@256 of \repexp over GRPO (blue) and Unlikeliness (orange). 
    \textbfc{\textsf{RepExp} is 2.1-4.1x more sample-efficient than Unlikeliness and 3.2-13.4x more sample-efficient than GRPO.} 
    \iclr{\vspace{-0.4cm}}
    }
    \label{fig:rl_results}
\end{figure}

\section{Problem Setup: From Inference-Time to Post-Training}
\label{sec:preliminaries}

In this section, we describe the two problem settings we consider for exploration: inference-time selection and RL post-training. In what follows, $\pi$ denotes a language model that maps a prompt $x\in\cX$ to a distribution over responses $y \in \cY$, and $\rstar(x,y)\in\crl{0,1}$ denotes a verifiable reward function that measures correctness at a task of interest, such as whether the answer to a math question is correct, or whether a
Python program passes unit tests. 

\paragraph{Methodology and motivation}
The goal of RL post-training is to find a policy $\pi$ that maximizes the expected reward $\En_{y \sim \pi(\cdot\mid{}x)}\brk*{r^\star(x,y)}$. 
Given a budget $k$ of verifier queries per question at each data collection round, post-training algorithms such as GRPO \citep{shao2024deepseekmath} update the model iteratively, where in each iteration they sample $k$ responses $y_1,\ldots,y_{k}\iidsim\pi(\cdot\mid{}x)$ per prompt $x$ from the current model $\pi$, query the verifier for a reward $\rstar(x,y_i)$ for each response, and use observed rewards to update the model for the next iteration.\loose

If the initial model $\pi$ has poor support over rewarding behavior---i.e., if $r^\star(x,y)=0$, with high probability under $y \sim \pi(\cdot\mid{}x)$---common RL algorithms such GRPO or PPO~\citep{schulman2017proximal} will not make any progress. This motivates interventions for exploration such as bonuses \citep{tang2017exploration,pathak2017curiosity,burda2018exploration,osband2019deep} and alternative sampling strategies~\citep{holtzmancurious,minhturning}. However, understanding the benefits and tradeoffs of these interventions in RL post-training is challenging because exploration interacts with optimization and generalization. To isolate exploration from these other
considerations, we center our investigation around a task we refer to as \textbfc{inference-time selection}, validating interventions in this setting before integrating them into the RL post-training pipeline.
\loose

\subsection{Inference-time selection}
\label{sec:inference-time-selection}

In the inference-time selection problem, we aim to use a fixed model $\pi$ to build a set of $k$ responses to a given prompt $x$ that are maximally diverse and have high probability of containing a positive response. As a simple baseline, we may independently sample $k$ responses from the model---potentially with high-temperature sampling, nucleus or min-p sampling, or other modified sampling schemes. However, the limitations of these baselines are (i) they may not effectively capture the model's understanding of diversity, and (ii) by sampling independently, we may waste verifier queries on redundant responses. \loose 

Instead, we focus on \emph{selection-based} approaches that initially sample a large set of candidate responses to the prompt, then use a diversity bonus $\div(x,y)$ derived from the model to filter this set down to a smaller, more diverse ``coreset'' \citep{clarkson2010coresets,feldman2020turning} that is passed to the verifier. Formally, we consider the following protocol: For each prompt $x$, we (1) sample an initial batch of $N$ responses $y_1,\ldots,y_{N}\sim\pi(\cdot\mid{}x)$, (2) use the inference-time selection algorithm $\Alg$ to  select $k$ of these responses (a subset $S\subset[N]$ of size $|S|=k$), and (3) query the verifier and record if any of the selected responses are rewarding. That is, we measure pass@$k$,\loose 
{%
\setlength{\abovedisplayskip}{0.5em}
\setlength{\belowdisplayskip}{0.3em}
\begin{align}
  \label{eq:passatk}
    \En_{y_1,\ldots,y_{N}\sim\pi(\cdot\mid{}x)}\brk[\big]{\En_{S \sim \Alg(x,y_1,\ldots,y_{N})}\brk[\big]{\max_{i \in S}[ r^\star(x,y_i)]}}.
\end{align}
}%
Importantly, the filtering algorithm operates without the verifier, and so successfully retaining high-quality responses translates to improved \emph{verifier efficiency} (i.e., number of responses for which we query the verifier) over the initial set of responses. 
Thus, a useful diversity bonus $\div(x,y)$ should yield a coreset that is maximally ``exploratory,'' in the sense that it is the most diverse set of responses that can be selected for a fixed budget of verifier queries. For example, in math reasoning settings, we would like to select the distinct-but-plausible proof strategies for a given problem, thus covering the space of potential proofs and maximizing the chance of selecting a correct one.

\begin{remark}\label{remark}
  While we mainly introduce inference-time selection as a stepping stone to post-training (i.e., algorithms in this setting are not more compute-efficient than naive sampling, even if they are more verifier-efficient), we do expect inference-time exploration to be useful in its own right for domains where querying a verifier is costly or difficult
  (e.g., collecting feedback from expert-level annotators), allowing for more sample- and hence cost-efficient data curation. For preliminary results in one such domain, please refer to~\arxiv{\cref{subsec:application}}\iclr{\cref{sec:appendix_protein_sequence_generation}}.
\end{remark}

\subsection{Reinforcement learning post-training}
As described earlier, RL post-training (e.g., with GRPO or PPO) proceeds by iteratively sampling batches of responses, querying the verifier, and using the feedback to update the current policy. After selecting a checkpoint $\hat{\pi}$, we evaluate performance via pass@$k$ under standard generation, $\En_{y_1,\ldots,y_k \sim \hat{\pi}(\cdot\mid{}x)}\brk[\big]{\max_{i \in [k]} r^\star(x,y_i)}$.
There are two natural approaches to integrate exploration methods into this process. The first is to adjust the independent sampling process only (e.g., through nucleus sampling or min-p sampling), and the second is to augment the training objective with an exploration bonus $\div(x,y)$. Our experiments focus on the latter approach; however, based on our results for inference-time selection, we expect that incorporating representation-based exploration into the sampling process will also improve RL post-training performance. Indeed, our two-pronged evaluation is motivated by the hypothesis that \emph{diversity bonuses $\div(x,y)$ that perform well at inference-time also perform well in post-training.}\loose

\begin{figure}
  \centering
\usetikzlibrary{positioning,arrows.meta,shapes,calc,backgrounds}
\begin{tikzpicture}[
    scale=0.9,
    font=\sffamily\footnotesize,
    stepbox/.style={
        rectangle,
        draw=violet!60,
        line width=1pt,
        fill=violet!8,
        rounded corners=3pt,
        minimum width=3.5cm,
        minimum height=0.6cm,
        align=center,
        font=\sffamily\footnotesize\bfseries,
        text=violet!80!black
    },
    prompt/.style={
        rectangle,
        draw=green!60!black,
        line width=1pt,
        fill=green!5,
        rounded corners=3pt,
        minimum width=2.5cm,
        minimum height=0.5cm,
        align=center,
        font=\sffamily\footnotesize\bfseries
    },
    response/.style={
        rectangle,
        draw=blue!50,
        line width=1pt,
        fill=blue!5,
        rounded corners=2pt,
        minimum width=3cm,
        minimum height=0.45cm,
        align=center,
        font=\sffamily\scriptsize
    },
    embed/.style={
        rectangle,
        draw=orange!60,
        line width=1pt,
        fill=orange!10,
        rounded corners=2pt,
        minimum width=1cm,
        minimum height=0.4cm,
        align=center,
        font=\sffamily\scriptsize\bfseries
    },
    point/.style={
        circle,
        fill=gray!40,
        inner sep=1.5pt,
        draw=gray!60
    },
    selected/.style={
        circle,
        fill=red!80,
        inner sep=2pt,
        draw=red!90,
        line width=0.8pt
    },
    arrow/.style={
        ->,
        >=stealth,
        thick,
        gray!70
    },
    flowarrow/.style={
        ->,
        >=stealth,
        very thick,
        blue!60
    }
]

\begin{scope}[local bounding box=col1]
    \node[stepbox] (step1) at (-6, 3) {{Step 1: Generate}};
    \node[prompt, below=0.4cm of step1, text=green!30!black] (prompt) {Prompt $x$};
    
    \node[response, below=0.6cm of prompt] (r1) {$y_1$: ``Let $x = \sqrt{2}$...''};
    \node[response, below=0.2cm of r1] (r2) {$y_2$: ``Set $a^2 + b^2$...''};
    \node[response, below=0.2cm of r2] (r3) {$y_3$: ``Note that...''};
    \node[below=0.15cm of r3] (dots1) {$\vdots$};
    \node[response, below=0.15cm of dots1] (rN) {$y_N$: ``First, consider...''};
\end{scope}

\begin{scope}[local bounding box=col2]
    \node[stepbox] (step2) at (-2, 3) {{Step 2: Embed}};
    
    \coordinate (embedstart) at (-2, 1.5);
    \node[embed] (e1) at (-2, 0.85) {$\bar{h}_1$};
    \node[embed, below=0.3cm of e1] (e2) {$\bar{h}_2$};
    \node[embed, below=0.3cm of e2] (e3) {$\bar{h}_3$};
    \node[below=0.15cm of e3] (dots2) {$\vdots$};
    \node[embed, below=0.2cm of dots2] (eN) {$\bar{h}_N$};
    
    \node[font=\sffamily\scriptsize\itshape, above=0.1cm of e1] {$\bar{h}_\theta(x,y_i)$};
\end{scope}

\begin{scope}[local bounding box=col3]
    \node[stepbox] (step3) at (2, 3) {{Step 3: Select}};
    
    \begin{scope}[shift={(2, 0)}]
        \draw[->, thin, gray!50] (-1.8,0) -- (1.8,0);
        \draw[->, thin, gray!50] (0,-1.8) -- (0,1.8);
        
        \begin{scope}[on background layer]
            \draw[blue!30, line width=1pt, fill=blue!5] (0,0) circle (1.5);
            \draw[orange!40, line width=1pt, rotate=30, fill=orange!5] (0,0) ellipse (1.3cm and 0.8cm);
            \draw[purple!50, line width=1pt, rotate=60, fill=purple!5] (0,0) ellipse (1.0cm and 0.5cm);
        \end{scope}

        \coordinate (p1) at (0.85, 0.95);
        \coordinate (p2) at (-0.8, 0.75);
        \coordinate (p3) at (0.45, -1.2);
        \coordinate (p4) at (1.3, -0.35);
        \coordinate (p5) at (-1.1, -0.6);
        \coordinate (p6) at (-0.85, 0.35);
        \coordinate (p7) at (0.4, -0.45);
        \coordinate (p8) at (-0.3, 1.15);
        
        \foreach \i in {1,...,8} {
            \node[point] at (p\i) {};
        }
        
        \node[selected] (s2) at (p2) {};
        \node[selected] (s5) at (p5) {};
        \node[selected] (s7) at (p7) {};
        \node[font=\sffamily\scriptsize\bfseries, red!90, above right=-0.5em of s2] {$y_2$};
        \node[font=\sffamily\scriptsize\bfseries, red!90, above right=-0.5em of s5] {$y_5$};
        \node[font=\sffamily\scriptsize\bfseries, red!90, above right=-0.5em of s7] {$y_7$};
        
        \node[font=\sffamily\footnotesize] at (1.25, 1.25) {\textcolor{blue!60}{$\Lambda_0$}};
        \node[font=\sffamily\footnotesize] at (-1.25, 0.25) {\textcolor{orange!70}{$\Lambda_1$}};
        \node[font=\sffamily\footnotesize] at (0.9, 0.5) {\textcolor{purple!70}{$\Lambda_2$}};
    \end{scope}
\end{scope}

\arxiv{
\begin{scope}[local bounding box=col4]
    \node[stepbox] (step4) at (6, 3) {{Step 4: Output}};
    
    \node[rectangle, draw=gray!70, line width=1pt, fill=gray!5, rounded corners=3pt,
          minimum width=3.5cm, align=left, font=\sffamily\scriptsize, inner sep=4pt, text=black!80!gray] 
          (math) at (6, -2.2) {
        \textbf{Selection:}\\[2pt]
        $y_{t+1} = \arg\max_{y} \bar{h}(x,y)^\top \Lambda_t \bar{h}(x,y)$.\\[4pt]
        \textbf{Update:}\\[2pt]
        $\Lambda_t = \Lambda_{t-1} - \frac{\Lambda_{t-1} \bar{h}_t\bar{h}_t^\top \Lambda_{t-1}}{1+\bar{h}_t^\top \Lambda_{t-1} \bar{h}_t}$.
    };
    
    \node[rectangle, draw=green!60!black, line width=1pt, fill=green!5, rounded corners=3pt,
            minimum width=3cm, font=\sffamily\footnotesize, text=green!30!black] (output) at (6, 0.5) {
        \textbf{Selected:} $\{y_2, y_5, y_7\}$
    };
\end{scope}
}

\iclr{
\begin{scope}[local bounding box=col4]
    \node[stepbox] (step4) at (6, 3) {{Step 4: Output}};
    
    \node[rectangle, draw=gray!70, line width=1pt, fill=gray!5, rounded corners=3pt,
          minimum width=3.5cm, align=left, font=\sffamily\scriptsize, inner sep=4pt, text=black!80!gray] 
          (math) at (6, -2.2) {
        \textbf{Selection:}\\[2pt]
        $y_{t+1} = \arg\max_{y} \bar{h}(x,y)^\top \Lambda_t \bar{h}(x,y)$.\\[4pt]
        \textbf{Update:}\\[2pt]
        $\Lambda_t = \Lambda_{t-1} - \frac{\Lambda_{t-1} \bar{h}_t\bar{h}_t^\top \Lambda_{t-1}}{1+\bar{h}_t^\top \Lambda_{t-1} \bar{h}_t}$.
    };
    
    \node[rectangle, draw=green!60!black, line width=1pt, fill=green!5, rounded corners=3pt,
            minimum width=3cm, font=\sffamily\footnotesize, text=green!30!black] (output) at (6, 0.5) {
        \textbf{Selected:} $\{y_2, y_5, y_7\}$
    };
\end{scope}
}

\draw[flowarrow] (r1.east) -- (e1.west);
\draw[flowarrow] (r2.east) -- (e2.west);
\draw[flowarrow] (r3.east) -- (e3.west);
\draw[flowarrow] (rN.east) -- (eN.west);

\draw[line width=1.2pt, color=blue!60] (e1.east) -- (.5, 0.0);
\draw[line width=1.2pt, color=blue!60] (e2.east) -- (.5, 0.0);
\draw[line width=1.2pt, color=blue!60] (e3.east) -- (.5, 0.0);
\draw[line width=1.2pt, color=blue!60] (eN.east) -- (.5, 0.0);

\node[rectangle, draw=red!60, line width=1pt, fill=red!5, rounded corners=3pt, text=red!60!black,
    minimum width=10cm, minimum height=0.7cm, align=center] at (0, -4.05) {
    {\normalsize\textbf{\texttt{RepExp:}} \textsf{Iteratively select responses that maximize elliptical bonus $\bar{h}^\top \Lambda_t \bar{h}$ for diversity.}}
};

\end{tikzpicture}
\label{fig:repexp}
  \iclr{\vspace{-0.3cm}}
  \caption{\textbf{\textsf{RepExp} for inference-time exploration.} Given a prompt, \textsf{RepExp} selects a diverse set of responses from a large pool by optimizing elliptical bonuses computed using representations from the language model.}
  \iclr{\vspace{-0.4cm}}
  \label{fig:repexp_figure}
\end{figure}

\section{Representation-Based Exploration: Inference-Time and RL}
\label{sec:methods}

Having motivated our setup, we now turn to the question of what diversity bonuses $\div(x,y)$ are suitable for exploration with language models. While many metrics have been proposed in the literature \citep{tang2017exploration,pathak2017curiosity,burda2018exploration,osband2019deep}, the challenge in adapting these techniques to language models is to simultaneously (i) capture the model's understanding and (ii) allow for efficient computation at scale. For example, count-based exploration~\citep{tang2017exploration} is simple, but unsuited to large decision spaces. On the other hand, approaches based on intrinsic curiosity \citep{pathak2017curiosity}, random network distillation \citep{burda2018exploration}, and posterior sampling~\citep{osband2019deep} are better suited to large or continuous spaces, but require additional learning machinery (i.e., auxiliary networks), which introduces significant complexity when scaling to language models.

\label{sec:representation}

We focus our experiments on an exploration strategy that avoids these shortcomings: An adaptation of elliptic bonuses and sampling---a de facto standard for linear bandits and active learning~\citep{abbasi2011improved,chu2011contextual,ash2021anti,henaff2022exploration, saran2023streaming, foster2025good}\footnote{Indeed, elliptical bonuses are ubiquitous in linear bandits and reinforcement learning, the simplest non-tabular RL setting, where they have strong provable guarantees. Beyond this, elliptic bonuses and iterative schemes such as \cref{alg:elliptic_coreset} have a long history in the theory of optimal experimental design \citep{kiefer1960equivalence,pukelsheim2006optimal,allen2021near} and active learning~\citep{cesa2009robust,agarwal2013selective,gu2014batch,chaudhuri2015convergence}}---with a representation derived from the language model's hidden states. This approach is arguably the simplest principled strategy that is appropriate for language models,  and already yields significant performance improvements in our experiments. Our use of elliptic bonuses is particularly inspired by \citet{foster2025good}, who prove that test-time exploration with such bonuses has provable computational benefits in a simplified language model setting with frozen features.\loose

At a high level, elliptical bonus methods operate over a $d$-dimensional feature space and adopt a linear-algebraic measure of novelty: given previously seen feature vectors $h_1,\ldots,h_{i-1}$ the novelty (or bonus) of a new feature vector $h$ is defined as
{%
\setlength{\belowdisplayskip}{-0.1cm}}
\begin{align}
    \label{eq:elliptic_bonus}
    \div(h\mid{}h_{1:i-1}) = h^\top \Sigma_i^{-1} h 
    \qquad 
    \Sigma_i = \lambda I_d + \sum_{j < i} h_j h_j^\top
\end{align}
\iclr{\vspace{-0.5cm}}

These bonuses are grounded in the theory of linear regression: If we fit a linear model $f_\theta(h) = \inner{\theta}{h}$ on features $h_1,\ldots,h_{i-1}$ (with associated regression targets), the prediction error on $h$ will be bounded by $\div(h\mid{}h_{1:i-1})$ \citep{lattimore2020bandit}. Thus, $\div(h\mid{}h_{1:i-1})$ reflects novelty, as it will be large for features $h$ that are poorly represented by the training dataset.\loose

To adapt elliptical bonuses to language models, we use representations extracted from the model itself as the feature vectors. Formally, given a prompt $x$ and a response $y_i=y_i^1,\ldots,y_i^T$ of $T$ tokens, we form the feature vector as $\bar{h}_\theta(x,y_i) := \frac{1}{T}\sum_{t=1}^T h_\theta(x,y_i^{1:t})$ where $h_\theta(x,y_i^{1:t})\in \mathbb{R}^d$ is the last-layer hidden state of the model on input $(x,y_i^{1:t})$ (the activation prior to the unembedding matrix). In~\cref{fig:representation_ablation}, we ablate this choice by comparing it to the effectiveness of using representations at the last token $h_\theta(x, y_i^{1:T})$ or penultimate token $h_\theta(x, y_i^{1:{T-1}})$ instead. We reduce dimensionality to 512 using a sparse random projection~\citep{Li2006VerySR}; see \cref{sec:appendix_inference_time} for details. \loose

\vspace{-0.05cm}
\paragraph{Representation-based exploration for inference-time selection}
\cref{fig:repexp_figure} presents \repexp, our main algorithm for inference-time selection using representation-based elliptical bonuses. Here, given a single prompt $x$ and a set of candidate generations $\cY=\crl*{y_1, \ldots, y_N}$, we iteratively select the generation that maximizes the elliptical bonus via $y_{t + 1} = \argmax_{y\in\cY}\bar{h}_\theta(x,y)\Sigma_t^{-1}\bar{h}_\theta(x,y)$, leveraging the representations $\bar{h}_\theta(x,y)$ described above. We efficiently update the inverse covariance matrix $\Sigma^{-1}$ using the Woodbury identity for $\bigoh(d^2)$ time per step~\citep{vetterling1992numerical}. We formally present our procedure in~\cref{alg:elliptic_coreset}. \loose

\arxiv{
\begin{figure}[t]
    \centering
    \adjustbox{valign=t}{\begin{minipage}{0.40\textwidth}
        \input{sections/algorithm_elliptic} 
    \end{minipage}}\hfill%
    \adjustbox{valign=t}{\begin{minipage}{0.57\textwidth}
        \centering
        \includegraphics[width=.75\linewidth]{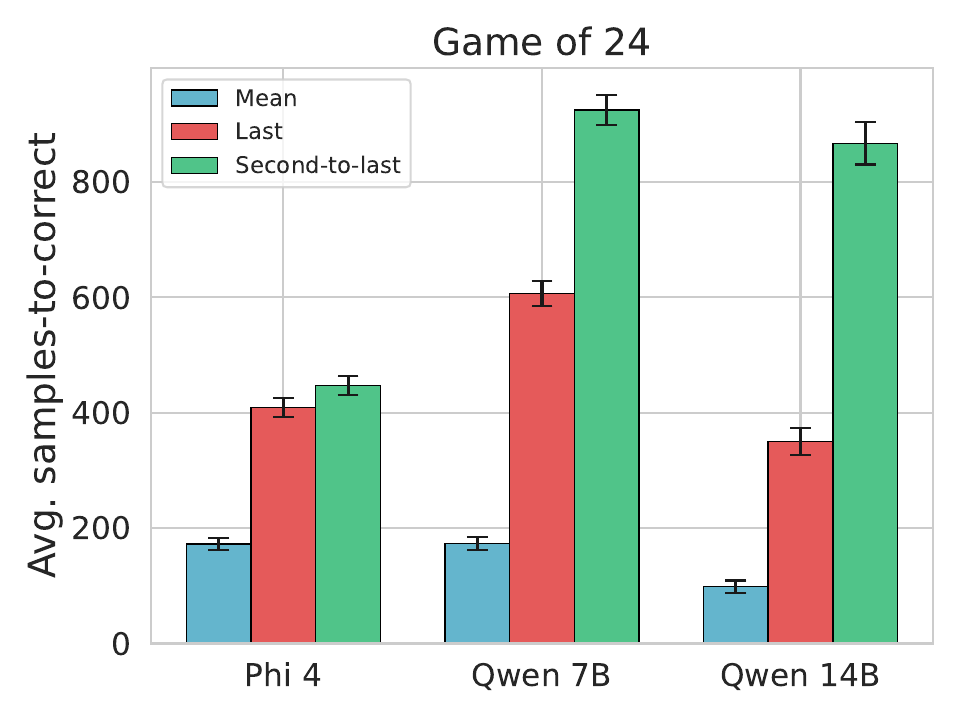}
        \vspace{-0.4cm}
        \caption{\textbf{Representation ablation.} We compare averaging all token representations to using those from the penultimate or final token. \textbfc{Averaging is over 2x more sample efficient.}}
        \label{fig:representation_ablation}
    \end{minipage}}
    \end{figure}
}
\iclr{
\begin{figure}[t]
    \centering
    \adjustbox{valign=t}{\begin{minipage}{0.46\textwidth}
        \input{sections/algorithm_elliptic} 
    \end{minipage}}\hfill%
    \adjustbox{valign=t}{\begin{minipage}{0.52\textwidth}
        \centering
        \includegraphics[width=.87\linewidth]{figures/ablation.pdf}
        \vspace{-0.4cm}
        \caption{\textbf{Representation ablation.} We compare averaging all token representations to using those from the penultimate or final token. \textbfc{Averaging is over 2x more sample efficient.}}
        \label{fig:representation_ablation}
    \end{minipage}}
    \iclr{\vspace{-0.3cm}}
    \end{figure}
}

\vspace{-0.05cm}
\paragraph{Representation-based exploration for RL post-training}
For our post-training experiments, we use the same representations $\bar{h}_\theta(x,y)$ as above, but directly augment the rewards with elliptic bonuses instead of performing coreset selection. Concretely, given the current iterate $\pi_\theta$ in GRPO, we first sample a group of $k$ responses $y_1,\ldots,y_k\iidsim\pi_\theta(\cdot\mid{}x)$ for each prompt $x$. Letting $\Sigma\ldef\lambda{}I_d + \sum_{i=1}^{k}\bar{h}_\theta(x,y_i)\bar{h}_\theta(x,y_i)^{\trn}$, we define the reward for response $y_i$ as\footnote{The bonus here can be interpreted as a \emph{leverage score} for $y_i$ \citep{drineas2006subspace,cohen2015uniform}.} $\rstar(x,y_i) + \beta\cdot \bar{h}_\theta(x,y_i)^\top \Sigma^{-1} \bar{h}_\theta(x,y_i)$,
where $\beta>0$ is a bonus parameter. While one could also imagine performing inference-time coreset selection in the loop with GRPO, this approach is more practical and efficient, and it achieves significant improvements in performance. We refer the reader to~\cref{sec:rl} and~\cref{sec:appendix_rl} for further details.

\oldparagraph{Why representation-based elliptical bonuses?} We summarize several desirable properties of these bonuses. First, by leveraging the hidden state of the model in featurization, the bonuses capture rich information about the generations, thereby incorporating the language model's prior knowledge. Second, the method is \emph{history-aware}\footnote{In the RL setting, we do not let the covariance matrix persist across multiple iterations of the same question, and hence there it is only \emph{group-aware}.}: the covariance matrix summarizes all previously selected generations, and redundancy with previous selection (in representation space) is penalized. Finally, the method is simple and scalable, involving no additional learning machinery and using rank-one updates to avoid costly matrix inversions.\loose

\section{Inference-Time Exploration: Experimental Results}
\label{sec:inference_time_exploration}

In this section, we investigate the performance of representation-based exploration for the inference-time selection problem. 
We detail the experimental setup in \cref{sec:setup}, present main findings in \cref{sec:coresets}, and present additional experiments with a ``token-level'' variant in \cref{sec:guided}.%

\label{sec:setup}

\iclr{
\begin{figure}[t]
    \centering
    \includegraphics[width=\linewidth]{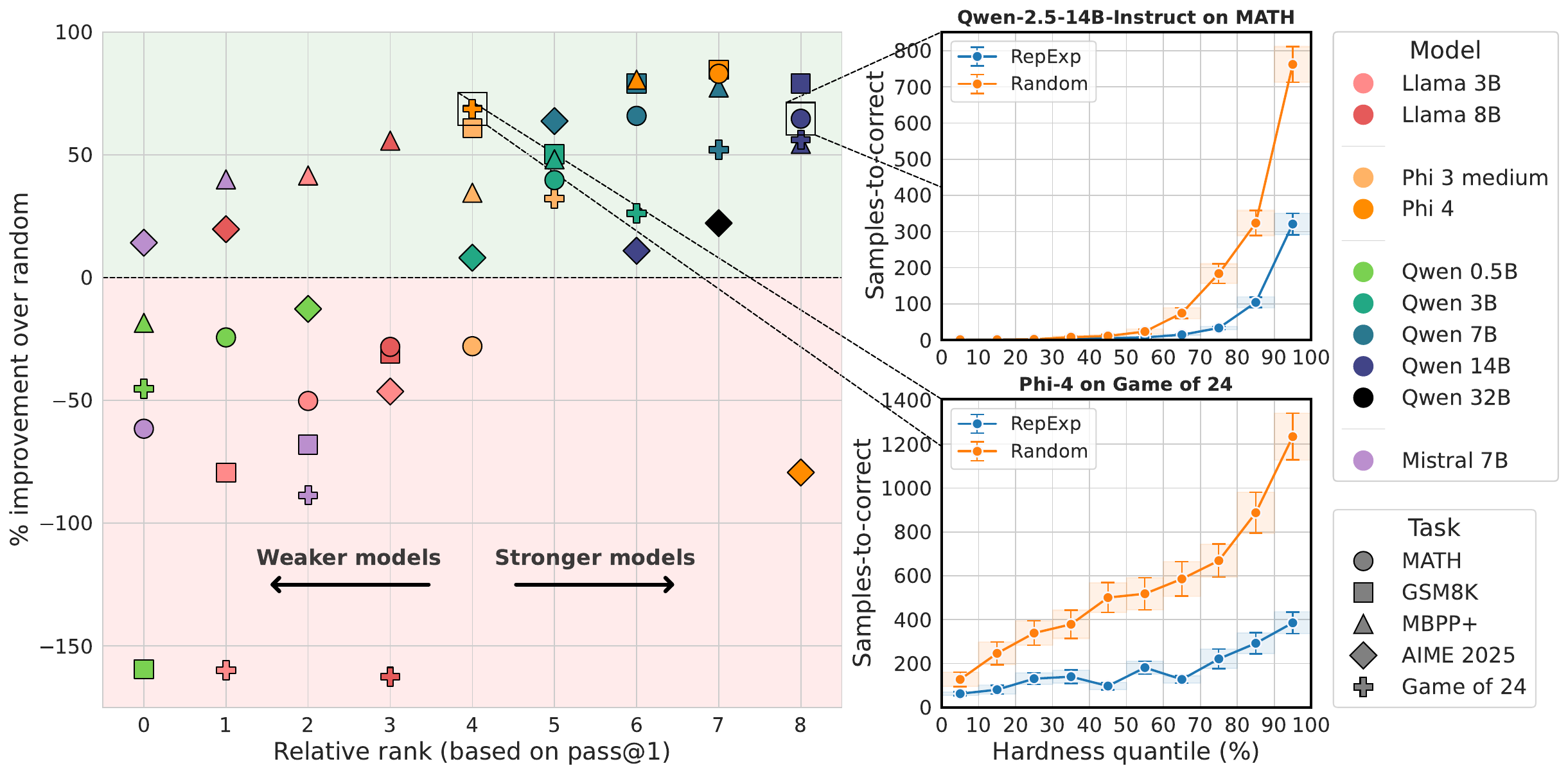}
    \iclr{\vspace{-0.3cm}}
    \arxiv{\vspace{-0.3cm}}
    \caption{\textbf{A closer look into when \repexp provides improvement.} \textbf{\emph{(Left)}} For each task, we rank models according to their pass@1 rate (the weakest model has rank 0, and the strongest has rank 8). We then plot relative improvement (\%) of \repexp over random sampling, sorting by rank on the x-axis. While \repexp can hurt weaker models (e.g., \texttt{Qwen-2.5-0.5B-Instruct}), we find \textbfc{stronger models almost always benefit from exploration} (e.g., \texttt{Qwen-2.5-14B-Instruct}). \textbf{\emph{(Right)}} For two different model-task pairs, we plot the samples-to-correct as a function of \emph{question hardness}. Hardness is measured by the samples-to-correct from a high-quality third-party model (\texttt{GPT-4o mini}). We find that \textbfc{\textsf{RepExp} has the greatest benefit on harder examples} (e.g., the hardest 20\% of questions on \texttt{MATH}). Shaded areas indicate one standard error.\loose
    \iclr{\vspace{-0.4cm}}
    }
    \label{fig:model_quality_vs_improvement}
  \end{figure}
}

\paragraph{Datasets} We use the \emph{test} splits of the following five datasets: \texttt{MATH}~\citep{hendrycks2measuring}, \texttt{GSM8K}~\citep{cobbe2021training}, \texttt{MBPP+}~\citep{liu2023your}, \texttt{Game-of-24}~\citep{yao2023tree}, and \texttt{AIME 2025}. We chose these tasks as they cover easy (\texttt{GSM8K}), medium (\texttt{MATH}), and harder (\texttt{Game-of-24}, \texttt{AIME}) difficulty levels in math. In addition, we include \texttt{MBPP+} to verify that our findings transfer to the coding domain. For a more detailed overview of these datasets, please refer to \cref{sec:appendix_coresets}.

\paragraph{Models} We consider a range of model families and sizes: $\texttt{Phi-3-Medium}$~\citep{phi3} and $\texttt{Phi-4}$, $\texttt{Llama-3.2-3B-Instruct}$ and $\texttt{Llama-3.1-8B-Instruct}$ \citep{dubey2024llama}, \citep{phi4}, $\texttt{Qwen-2.5-X-Instruct}$ \citep{qwen2p5} for $\texttt{X}\in\crl*{\texttt{0.5B}, \texttt{3B}, \texttt{7B}, \texttt{14B}, \texttt{32B}}$, and $\texttt{Mistral-7B}$ \citep{mistral7b}.\loose %

\arxiv{
\begin{figure}[t]
    \centering
    \includegraphics[width=\linewidth]{figures/perc_improvement_vs_rank_p_at_1_q=1.0_CI_False.pdf}
    \iclr{\vspace{-0.3cm}}
    \arxiv{\vspace{-0.3cm}}
    \caption{\textbf{A closer look into when \repexp provides improvement.} \textbf{\emph{(Left)}} For each task, we rank models according to their pass@1 rate (the weakest model has rank 0, and the strongest has rank 8). We then plot relative improvement (\%) of \repexp over random sampling, sorting by rank on the x-axis. While \repexp can hurt weaker models (e.g., \texttt{Qwen-2.5-0.5B-Instruct}), we find \textbfc{stronger models almost always benefit from exploration} (e.g., \texttt{Qwen-2.5-14B-Instruct}). \textbf{\emph{(Right)}} For two different model-task pairs, we plot the samples-to-correct as a function of \emph{question hardness}. Hardness is measured by the samples-to-correct from a high-quality third-party model (\texttt{GPT-4o mini}). We find that \textbfc{\textsf{RepExp} has the greatest benefit on harder examples} (e.g., the hardest 20\% of questions on \texttt{MATH}). Shaded areas indicate one standard error.\loose
    \iclr{\vspace{-0.4cm}}
    }
    \label{fig:model_quality_vs_improvement}
  \end{figure}
}

\paragraph{Algorithms}
In our experiment protocol, we initially draw a pool of $N$ candidate generations from the base model, where unless otherwise specified we use temperature $\tau=1.0$ and $\text{top-p}=1.0$, which we refer to as vanilla settings (for \texttt{MBPP+}, we set $\text{top-p} = 0.95$).
Then we compare \repexp with budget $k$ with the baseline of random sampling (without replacement) of $k$ responses from this pool. 
We consider generating the pool using different samplers such as nucleus and min-p sampling in~\cref{fig:different_candidate_pools}, but always use random sampling without replacement as the baseline. 
See \cref{sec:appendix_coresets} for further details.\loose

\begin{figure}[t]
    \centering
    \includegraphics[width=\linewidth]{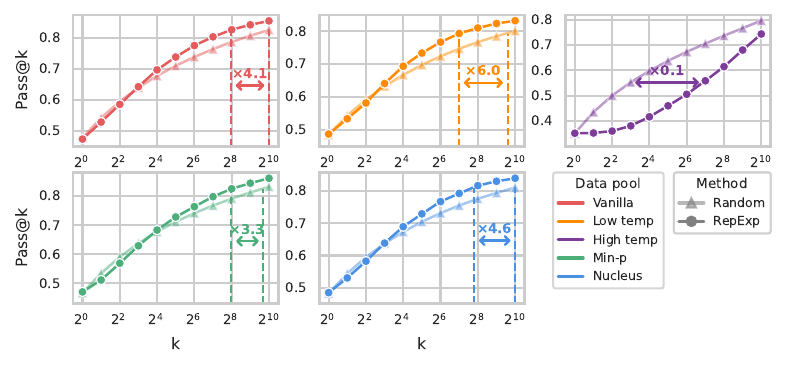}
    \vspace{-0.7cm}
    \caption{\textbf{Benefits of \repexp across data pools}, for the inference-time exploration setup in \cref{fig:overview} (\cref{sec:coresets}). We plot the pass@$k$ curve for random vs. \repexp across five different data pools (base samplers). \textbfc{\repexp on top of vanilla generation outperforms random sampling on top of \emph{any} of the generation strategies.} Moreover, except for the high-temperature pool, \repexp over a pool improves verifier efficiency, with \textbfc{3x to 6x improvement over random sampling} for that pool.
    \iclr{\vspace{-0.4cm}}
    }
    \label{fig:different_candidate_pools}
\end{figure}

\subsection{Benefits of representation-based exploration}
\label{sec:coresets}

We present our results as a series of Research Findings (RF), expanding on the findings in \cref{fig:overview}.

\paragraph{\underline{RF1}: \repexp improves verifier efficiency across models and tasks}
In \cref{fig:overview}, we plot the \emph{samples-to-correct}, defined as the expected number of samples $k$ with which we query the verifier before finding a correct answer, for all model-task pairs. We compare \repexp, which picks responses to a fixed question according to \cref{alg:elliptic_coreset}, with the random sampling baseline. For both algorithms, we average the samples-to-correct across all questions in the dataset. Our results show the bulk of the data fall below the line $y = x$, indicating exploration improves over random sampling in most cases. For example, we find \repexp obtains a 50\% improvement in samples-to-correct for \texttt{Qwen-2.5-14b-Instruct} in \texttt{MATH}, \texttt{GSM8K}, \texttt{MBPP+}, and \texttt{Game-of-24}. 

\paragraph{\underline{RF2}: The benefits of \repexp grow with model strength} Since \repexp relies on the model's internal representations, it is natural to hypothesize that weaker models might have worse representations and thus benefit less from exploration. To validate this hypothesis, we expand the collection of models in \cref{fig:overview} to include additional weaker models (e.g., \texttt{Qwen-2.5-0.5B-Instruct} and \texttt{Mistral-7b}). For each task, we rank models according to their pass@1 performance, and plot the relative improvement of representation-based exploration over random sampling in \cref{fig:model_quality_vs_improvement}. We indeed observe a strong correlation between model strength and the benefit from representation-based exploration: weaker models (e.g., \texttt{Qwen-2.5-0.5B}) experience no benefit or even degradation, while the strongest models (e.g., \texttt{Qwen-2.5-32B}) almost uniformly benefit.\loose

\paragraph{\underline{RF3}: \repexp provides more improvement for harder questions}
Beyond \textbf{RF2}---which provides insight into the benefits of \repexp across \emph{models}---we also evaluate the benefits across \emph{question difficulty}, for a fixed model and task. To this end, we sort all questions for a given task by their samples-to-correct under random sampling with a reference model (\texttt{GPT-4o-mini}). We then group the questions in bins, each containing 10\% of the dataset, and plot the average samples-to-correct for each bin for both \repexp and random sampling. As displayed in~\cref{fig:model_quality_vs_improvement} (right), \repexp matches or improves verifier efficiency across all bins, with the largest improvements on the hardest bins (e.g., the hardest 20\% of questions on \texttt{MATH}). Concretely, on the hardest \texttt{Game-of-24} questions, we find that \repexp with \texttt{Phi-4} provides a 3x improvement in verifier efficiency.\loose

\paragraph{\underline{RF4}: \repexp improves verifier efficiency over standard generation modifications} We now investigate the effect of alternative base sampling strategies that might already induce diversity. Using \texttt{Qwen-2.5-7B-Instruct} on \texttt{MATH}, we change the underlying generation strategy to use one of five different generation settings: vanilla (no changes), low temperature ($\tau = 0.6$), high temperature ($\tau = 1.5$), min-p~\citep{minhturning} ($\tau = 1.5, p = 0.05$), and nucleus sampling~\citep{holtzmancurious} ($\text{top-p} = 0.9$). In \cref{fig:different_candidate_pools}, we find that \repexp improves verifier efficiency in all settings, except for when paired with 
high-temperature sampling. We suspect this is because high-temperature sampling tends to produce less coherent responses, which may look novel in representation space, yet do not necessarily contain correct answers.

\subsection{Extension: Representation-based exploration at the token level}
\label{sec:guided}

While useful in its own right as a testbed for benchmarking the viability of exploration methods, one drawback of the inference-time selection setting is that compute---as measured by $N$, the size of the per-question data pool---may need to be rather large relative to $k$ for selection to yield improvements. As an extension, we conduct a preliminary investigation into algorithms that use elliptic bonuses to guide the autoregressive generation process itself, removing the need to generate such a pool at all.

\arxiv{\begin{wrapfigure}{r}{0.45\linewidth}
    \centering
    \vspace{-0.6cm}
    \includegraphics[width=\linewidth]{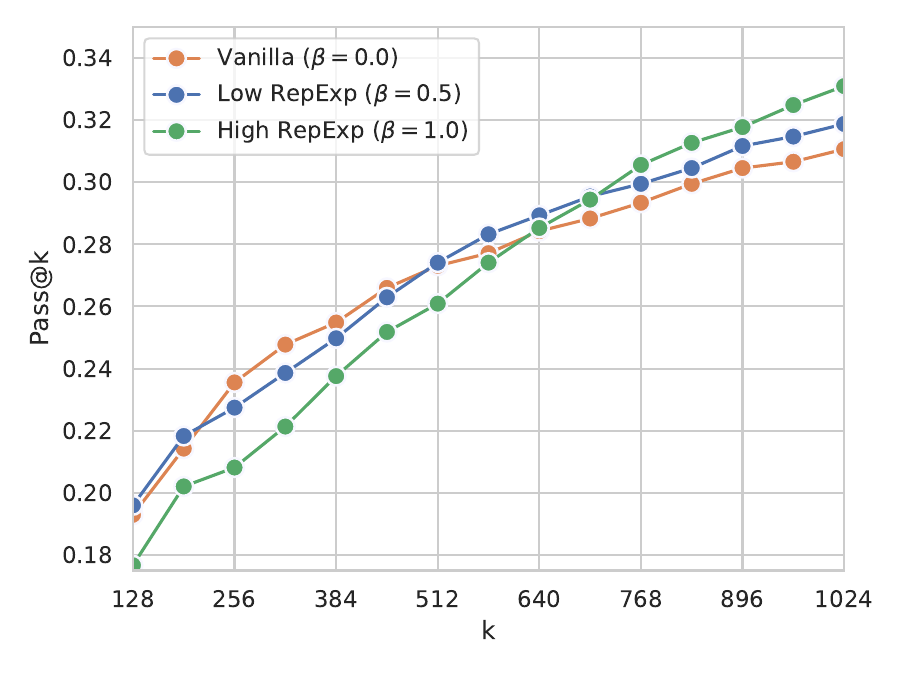}
    \vspace{-0.6cm}
    \caption{\textbf{Representation-based exploration at the token level}, compared to naive autoregressive generation ($\beta=0$) for inference-time exploration. \repexp improves pass@k for large $k$ over naive sampling. 
    }
    \vspace{-0.5cm}
    \label{fig:token_level}
\end{wrapfigure}}

\paragraph{Representation-based exploration for autoregressive generation} To guide sampling for improved diversity, given a budget $k$, we use features from responses $1,\ldots,i-1$ to guide the generation of the $i\text{th}$ response by modifying the logits at every generation step. 
Specifically, consider the $i\text{th}$ generation for a given prompt $x$. At each position $t$ \emph{within} the generation, we perturb the $|V|$-dimensional token-level logit vector as:\loose
\[
    \tilde{\mathbf{z}}^{(i)}(x, y_{<t}) \;=\; \mathbf{z}^{(i)}(x, y_{<t}) + \beta \cdot \mathbf{b}^{(i)}(x, y_{<t}, V), \quad 
\tilde{\mathbf{z}}^{(i)}(x) \in \mathbb{R}^{|V|},
\]
where the bonus $\mathbf{b}^{(i)}(x, y_{<t}, V)$ is a token-level elliptic bonus, defined as:
\[
    \mathbf{b}_j^{(i)}(x, y_{<t}, V) = \sqrt{\tilde{h}_{\theta}(x, y_{<t}, v_j)^\top \Sigma_{(i)}^{-1} \tilde{h}_{\theta}(x, y_{<t}, v_j)},
\]
for $v_j \in V$. Here $\tilde{h}_{\theta}(x, y_{<t}, v_j)$ is a mean-centered Transformer representation for sequence $(y_{<t}, v_j)$; see \cref{sec:appendix_guided} for further details.%

\iclr{\begin{wrapfigure}{r}{0.45\linewidth}
    \centering
    \includegraphics[width=\linewidth]{figures/token_level_cdf.pdf}
    \iclr{\vspace{-0.6cm}}
    \caption{\textbf{Representation-based exploration at the token level}, compared to naive autoregressive generation ($\beta=0$) for inference-time exploration. \repexp improves pass@k for large $k$ over naive sampling. 
    }
    \iclr{\vspace{-0.5cm}}
    \arxiv{\vspace{-2cm}}
    \label{fig:token_level}
\end{wrapfigure}}

    \textbf{\mbox{\underline{RF5}: \repexp for autoregressive generation improves solve rate}.}
    In ~\cref{fig:token_level}, we visualize the effect of token-level representation-based exploration for \texttt{Qwen-2.5-7B-Instruct} on the \texttt{MATH} task. We use two values for the bonus parameter $\beta$ (0.5, 1.0) and compare the pass@k to vanilla autoregressive generation for the 200 hardest (but solvable) questions in \texttt{MATH}, as judged by \texttt{GPT-4o-mini}. While token-level exploration tends to solve fewer problems compared to vanilla generation when given a small budget, this trend reverses when the budget exceeds $512 - 640$ (depending on choice for $\beta$).~\cref{fig:token_level_full} further shows that the improvement in solve rate is largest on the hardest questions. These results are encouraging, though further research is required (e.g., on more tasks and models) before one can draw a definitive conclusion. Further, our implementation is not optimized for efficiency and requires at least one additional forward pass per generation step compared to naive sampling.\loose

\arxiv{
\subsection{Application: Protein sequence generation}
\label{subsec:application}
\iclr{
\begin{figure}
    \centering
    \includegraphics[width=0.9\linewidth]{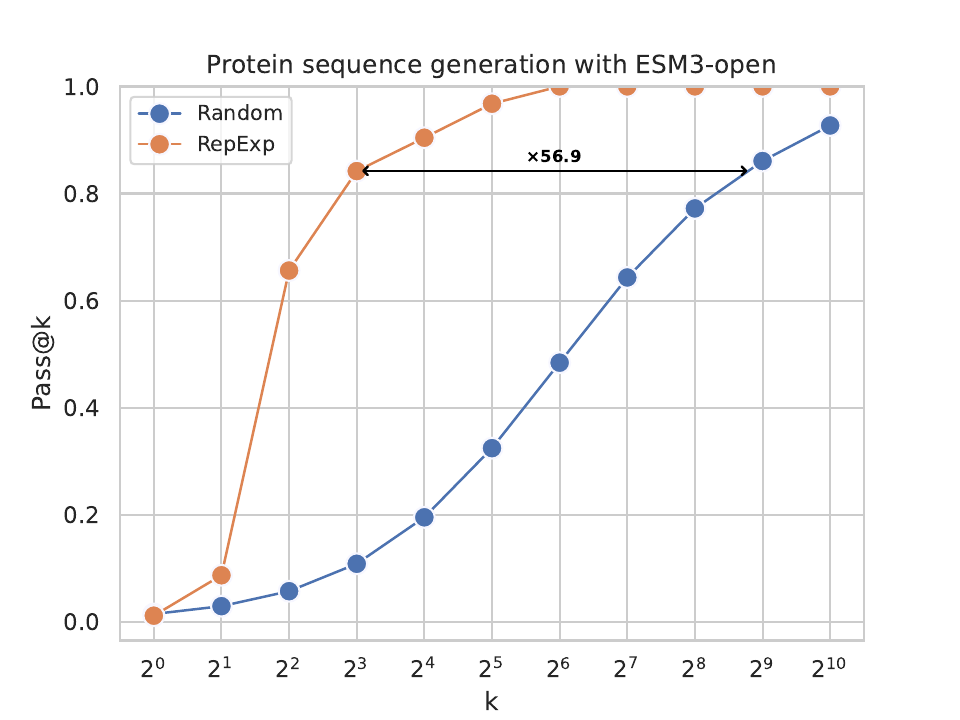}
    \caption{\textbf{Benefits of \repexp for protein sequence generation.} We plot the pass@$k$ curve for random vs. \repexp when performing inference-time selection on protein sequences from \texttt{ESM3-open}. \textbfc{Compared to random sampling, \repexp provides up to 56.9x improvements in verifier efficiency.}
    }
    \label{fig:protein_pass_at_k}
\end{figure}
}
As pointed out in \cref{remark}, we believe inference-time exploration can be useful in its own right for domains where verification is expensive. While an extensive study of this merits further study, we provide some preliminary experiments in the domain of protein sequence generation. In this domain, verification requires lab work, which is much more time consuming than sampling protein sequences from a generative protein model. Following prior work, we use two proxy metrics described below in lieu of expensive wet-lab verification.

\arxiv{\begin{wrapfigure}{r}{0.5\linewidth}
    \centering
    \vspace{-0.7cm}
    \includegraphics[width=\linewidth,trim={0 0 0 0.25cm},clip]{figures/protein_pass_at_k.pdf}
    \vspace{-0.5cm}
    \caption{\textbf{Benefits of \repexp for protein sequence generation.} We plot the pass@$k$ curve for random vs. \repexp when performing inference-time selection on protein sequences from \texttt{ESM3-open}. \textbfc{Compared to random sampling, \repexp provides up to 56.9x improvements in verifier efficiency.}
    }
    \vspace{-0.1cm}
    \label{fig:protein_pass_at_k}
\end{wrapfigure}}

For our experimental setup, we follow the unconditional generation setup in~\citet{esm3} and sample 2048 protein sequences for fully masked sequence prompts ranging in length from 64 to 916 with increments of size 4 (i.e. {64, 68, ..., 916}) using \texttt{ESM3-open}. Then, we perform structure prediction for all sampled sequences using ESMFold~\citep{esm_fold}, which returns pLDDT and pTM scores per sequence. We count a sampled sequence as plausible when pLDDT > 0.8 and pTM > 0.8. To get representations for every sequence, we use \texttt{esmc-600m}, the 600M parameter version of ESM C~\citep{esm3,esm2024cambrian}, and average the last-layer hidden representations. Finally, we perform inference-time selection for every prompt and its 2048 corresponding sampled sequences, using both random selection and \repexp. We find the average samples-to-correct for random to be 240.3 and for \repexp to be 6.7. This corresponds to a 35.9x verifier efficiency improvement. We also plot the corresponding pass@k curves in~\cref{fig:protein_pass_at_k}, for which we find up to 56.9x improvements in verifier efficiency.

We find these results to be very promising and suggestive that our method can be practical in a domain where verification is genuinely expensive.

}

\section{Exploration for RL Post-Training}
\label{sec:rl}

Following the methodology in \cref{sec:preliminaries}, we now investigate the use of representation-based exploration to guide the RL post-training process.

\paragraph{Tasks and models} We use \texttt{Qwen-2.5-7b-Instruct} evaluated on \texttt{MATH}, \texttt{GSM8K}, and \texttt{AIME 2024}. Because there are only $30$ questions in \texttt{AIME 2024}, we follow~\citet{yu2025dapo} and use the \texttt{DAPO-Math-17K} dataset for training, leaving \texttt{AIME 2024} for evaluation only. Please refer to~\cref{sec:appendix_rl} for exact details on train, validation, and test splits for all tasks.

\paragraph{Baselines} We compare our method with three baselines:
\textbf{(1) Unlikeliness}~\citep{he2025rewarding} modifies GRPO by scaling the extrinsic rewards by a value inversely related to the likelihood of a generation under the current policy. \textbf{(2) GRPO} is simply an unmodified version of the original GRPO algorithm. \textbf{(3) Base Model} is the original untrained model included as a reference point to see if methods can improve upon it, especially at high values of $k$.\loose

\textbf{Representation-based Exploration (\repexp).} We augment the rewards in GRPO with representation-based bonuses as described in \cref{sec:representation}. Concretely, we add sequence-level elliptic bonuses to the binary extrinsic rewards provided by the verifier: $r_i = R(x,y_i) + b(x, y_i)$ for the $i^{\text{th}}$ rollout $y_i$ of a given prompt $x$. As mentioned in \cref{sec:representation}, we specifically use leverage score-like elliptic bonuses, which allow easier control over their scale as they are bounded in $[0, 1]$.
The covariance matrix $\Sigma$ used to compute bonuses is re-initialized for each batch of RL training. This way, the bonus $b(x, y_i)$ measures the novelty of $y_i$ only with respect to the other rollouts in the batch---previously generated sequences for $x$ are not considered. To better maximize the bonus along all relevant directions in representation space, we draw a new random projection of $\bar{h}_\theta(x, y_i)$ at each optimization step. For further details, please refer to~\cref{sec:appendix_rl}.

\arxiv{
\begin{wrapfigure}{r}{0.45\linewidth}
    \centering
    \vspace{-0.64cm}
    \includegraphics[width=\linewidth]{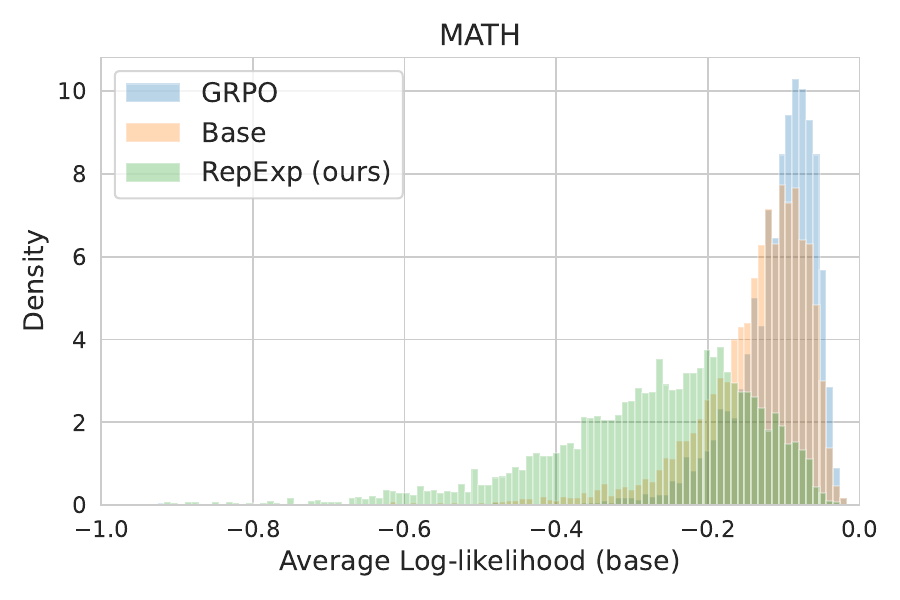}
    \vspace{-0.5cm}
    \caption{\textbf{Anti-sharpening behavior of \repexp.} We plot the histogram of average log-likelihoods evaluated under the base model. While GRPO exhibits sharpening, the responses from \repexp are significantly more novel. See~\cref{sec:appendix_beyond_sharpening} for results on \texttt{GSM8K}.\loose
    }
    \vspace{-0.6cm}
    \label{fig:log_prob_histogram_math}
\end{wrapfigure}
}

\textbf{\underline{RF6}: \repexp improves pass@k.} \cref{fig:rl_results} compares pass@$k$ curves after training for all methods. In line with earlier work, we find that all instantiations of GRPO improve the pass@k for small values of $k$, and that standard GRPO degrades performance relative to the base model for large values of $k$~\citep{yue2025does}. Exploration appears to be an essential part of mitigating the latter effect: policies fit using \repexp preserve or improve pass@$k$ for large values of $k$ with limited reductions for small $k$. This phenomenon is more pronounced for \repexp than for Unlikeliness.

\iclr{
\begin{wrapfigure}{r}{0.45\linewidth}
    \centering
    \vspace{-0.6cm}
    \includegraphics[width=\linewidth]{figures/log_probs_histogram_math.pdf}
    \vspace{-0.5cm}
    \caption{\textbf{Anti-sharpening behavior of \repexp.} We plot the histogram of average log-likelihoods evaluated under the base model. While GRPO exhibits sharpening, the responses from \repexp are significantly more novel. See~\cref{sec:appendix_beyond_sharpening} for results on \texttt{GSM8K}.\loose
    }
    \vspace{-0.4cm}
    \label{fig:log_prob_histogram_math}
\end{wrapfigure}
}

\paragraph{\underline{RF7}: \repexp responses look novel} To provide further insight into whether \repexp is able to move beyond merely sharpening the base model, we run an additional experiment inspired by Figure 4 in~\citet{karan2025reasoning}. Specifically, we sample a single response from the base model, the GRPO post-trained model, and the \repexp post-trained model for each question in the full test set of \texttt{MATH}. We then score all responses under the base model in terms of log likelihood. We plot the resulting histogram in~\cref{fig:log_prob_histogram_math}. We find that responses from \repexp tend to be less likely under the base model, indicating that it generates more novel responses and is therefore not merely performing sharpening. In contrast, notice that standard GRPO exhibits sharpening behavior, as demonstrated by the movement of probability mass towards the right with respect to the base model. Taken together with RF6, these results suggest that with the right exploration strategy, we may be able to escape the sharpening regime and discover novel model behaviors.\loose

\section{Discussion}
\label{sec:conclusion}
\paragraph{Related work}

    Several recent works aim to encourage exploration in language models, either by adapting exploration techniques from deep reinforcement learning, or by augmenting PPO or GRPO in ways that are more specialized to language models~\citep{he2025rewarding,cheng2025reasoning,chen2025passk,zhou2025expo,setlur2025e3,liu2025prorl}. Examples of the former include count-based exploration via pseudo-counts~\citep{bai2025online} and at the outcome level~\citep{song2025outcome}, random network distillation~\citep{liu2024sample,gao2025navigate}, and posterior sampling~\citep{dwaracher2024laefficient}. Examples of the latter include rewarding unlikely-but-correct responses~\citep{he2025rewarding}, entropy bonuses~\citep{cheng2025reasoning}, adapting the number of rollouts based on question difficulty~\citep{yang2025depth}, training the model to explore in-context~\citep{setlur2024rewarding}, using a learned classifier to jointly optimize diversity and quality~\citep{li2025jointly}, adding a diversity term based on determinantal point processes to the RL objective~\citep{chen2025enhancing},  and reformulating post-training to more-directly maximize the pass@$N$ objective~\citep{balashankar2024infalign,chow2025inference,chen2025passk,walder2025pass,tangoptimizing}.
    Among these, we experiment with the unlikeliness reward approach of~\citet{he2025rewarding} as a baseline, due to its robust performance and clean implementation. 
    More generally, our work is unique in (1) the specific representation-based objective, and (2) our focus on \emph{inference-time} as a means to validate methods with minimal confounding factors. 
    See~\cref{sec:additional_related} for a detailed overview.

\paragraph{Final remarks} Our work shows that deliberate exploration is a viable path toward expanding the reasoning capabilities of language models, offering the possibility of discovering novel behaviors that would be unlikely under naive sampling. While our results show that representation-based diversity is effective at incentivizing exploration, the algorithm design space for exploration techniques is vast, and there is still much to understand regarding how to best use the knowledge encoded in foundation models to guide exploration. Along these lines, natural directions for future work include:
\begin{enumerate}
    \item \emph{Scaling up RL compute,} and combining exploration with other techniques known to improve reasoning behavior in RL post-training, such as prolonged reinforcement learning \citep{liu2025prorl}.\loose
    \item \emph{Exploration for autoregressive generation.} Our results in \cref{sec:guided} show that incentivizing diversity during autoregressive generation is a promising approach to reducing the computational burden of exploration, but much remains to be done in terms of (1) understanding which diversity metrics are most helpful, and (2) optimizing the implementation to close the compute gap.
\item \emph{Beyond verifiable rewards.} How can we deliberately incentivize exploration in domains without verifiable rewards, while simultaneously mitigating reward hacking? 
\end{enumerate}

\section*{Acknowledgements}

We thank Adam Block, Qinghua Liu, and 
Max Simchowitz for valuable feedback on this work. We thank Manan Tomar, Audrey Huang, Spencer Whitehead, Prabhat Nagarajan, and Karthik Narasimhan for support and encouragement throughout the project.

\label{sec:acknowledgements}

\section*{Reproducibility Statement}
\label{sec:reproducibility}

To ensure reproducibility of our results, we have listed all relevant details (hyperparameters, experiment resources, etc.) for the inference-time experiments in~\cref{sec:appendix_inference_time}, and those for RL post-training in~\cref{sec:appendix_rl}. In addition, the code for all our experiments and plots can be found at \url{https://rep-exp.github.io}. We used Weights \& Biases for experiment tracking and visualizations to develop
insights for this paper.

\clearpage
\iclr{
\bibliographystyle{iclr2026_conference}
}
\bibliography{refs}

\begin{thebibliography}{90}
\providecommand{\natexlab}[1]{#1}
\providecommand{\url}[1]{\texttt{#1}}
\expandafter\ifx\csname urlstyle\endcsname\relax
  \providecommand{\doi}[1]{doi: #1}\else
  \providecommand{\doi}{doi: \begingroup \urlstyle{rm}\Url}\fi

\bibitem[Abbasi-Yadkori et~al.(2011)Abbasi-Yadkori, P{\'a}l, and Szepesv{\'a}ri]{abbasi2011improved}
Yasin Abbasi-Yadkori, D{\'a}vid P{\'a}l, and Csaba Szepesv{\'a}ri.
\newblock Improved algorithms for linear stochastic bandits.
\newblock In \emph{Advances in Neural Information Processing Systems}, 2011.

\bibitem[Abdin et~al.(2024{\natexlab{a}})Abdin, Aneja, Behl, Bubeck, Eldan, Gunasekar, Harrison, Hewett, Javaheripi, Kauffmann, et~al.]{phi3}
Marah Abdin, Jyoti Aneja, Harkirat Behl, S{\'e}bastien Bubeck, Ronen Eldan, Suriya Gunasekar, Michael Harrison, Russell~J Hewett, Mojan Javaheripi, Piero Kauffmann, et~al.
\newblock Phi-4 technical report.
\newblock \emph{arXiv preprint arXiv:2412.08905}, 2024{\natexlab{a}}.

\bibitem[Abdin et~al.(2024{\natexlab{b}})Abdin, Aneja, Behl, Bubeck, Eldan, Gunasekar, Harrison, Hewett, Javaheripi, Kauffmann, et~al.]{phi4}
Marah Abdin, Jyoti Aneja, Harkirat Behl, S{\'e}bastien Bubeck, Ronen Eldan, Suriya Gunasekar, Michael Harrison, Russell~J Hewett, Mojan Javaheripi, Piero Kauffmann, et~al.
\newblock Phi-4 technical report.
\newblock \emph{arXiv preprint arXiv:2412.08905}, 2024{\natexlab{b}}.

\bibitem[Agarwal(2013)]{agarwal2013selective}
Alekh Agarwal.
\newblock Selective sampling algorithms for cost-sensitive multiclass prediction.
\newblock In \emph{International Conference on Machine Learning}, pages 1220--1228. PMLR, 2013.

\bibitem[Agarwal et~al.(2020{\natexlab{a}})Agarwal, Henaff, Kakade, and Sun]{agarwal2020pc}
Alekh Agarwal, Mikael Henaff, Sham Kakade, and Wen Sun.
\newblock Pc-pg: Policy cover directed exploration for provable policy gradient learning.
\newblock \emph{Advances in neural information processing systems}, 33:\penalty0 13399--13412, 2020{\natexlab{a}}.

\bibitem[Agarwal et~al.(2020{\natexlab{b}})Agarwal, Kakade, Krishnamurthy, and Sun]{agarwal2020flambe}
Alekh Agarwal, Sham Kakade, Akshay Krishnamurthy, and Wen Sun.
\newblock {FLAMBE}: Structural complexity and representation learning of low rank {MDP}s.
\newblock \emph{Neural Information Processing Systems (NeurIPS)}, 2020{\natexlab{b}}.

\bibitem[Allen-Zhu et~al.(2021)Allen-Zhu, Li, Singh, and Wang]{allen2021near}
Zeyuan Allen-Zhu, Yuanzhi Li, Aarti Singh, and Yining Wang.
\newblock Near-optimal discrete optimization for experimental design: A regret minimization approach.
\newblock \emph{Mathematical Programming}, 186\penalty0 (1):\penalty0 439--478, 2021.

\bibitem[Arumugam and Griffiths(2025)]{arumugam2025toward}
Dilip Arumugam and Thomas~L Griffiths.
\newblock Toward efficient exploration by large language model agents.
\newblock \emph{arXiv preprint arXiv:2504.20997}, 2025.

\bibitem[Ash et~al.(2021)Ash, Zhang, Goel, Krishnamurthy, and Kakade]{ash2021anti}
Jordan~T Ash, Cyril Zhang, Surbhi Goel, Akshay Krishnamurthy, and Sham Kakade.
\newblock Anti-concentrated confidence bonuses for scalable exploration.
\newblock \emph{arXiv preprint arXiv:2110.11202}, 2021.

\bibitem[Austin et~al.(2021)Austin, Odena, Nye, Bosma, Michalewski, Dohan, Jiang, Cai, Terry, Le, et~al.]{austin2021program}
Jacob Austin, Augustus Odena, Maxwell Nye, Maarten Bosma, Henryk Michalewski, David Dohan, Ellen Jiang, Carrie Cai, Michael Terry, Quoc Le, et~al.
\newblock Program synthesis with large language models.
\newblock \emph{arXiv preprint arXiv:2108.07732}, 2021.

\bibitem[Bai et~al.(2025)Bai, Zhang, Qiu, Zhang, Xu, and Li]{bai2025online}
Chenjia Bai, Yang Zhang, Shuang Qiu, Qiaosheng Zhang, Kang Xu, and Xuelong Li.
\newblock Online preference alignment for language models via count-based exploration.
\newblock \emph{arXiv preprint arXiv:2501.12735}, 2025.

\bibitem[Balashankar et~al.(2024)Balashankar, Sun, Berant, Eisenstein, Collins, Hutter, Lee, Nagpal, Prost, Sinha, et~al.]{balashankar2024infalign}
Ananth Balashankar, Ziteng Sun, Jonathan Berant, Jacob Eisenstein, Michael Collins, Adrian Hutter, Jong Lee, Chirag Nagpal, Flavien Prost, Aradhana Sinha, et~al.
\newblock Infalign: Inference-aware language model alignment.
\newblock \emph{arXiv preprint arXiv:2412.19792}, 2024.

\bibitem[Burda et~al.(2018)Burda, Edwards, Storkey, and Klimov]{burda2018exploration}
Yuri Burda, Harrison Edwards, Amos Storkey, and Oleg Klimov.
\newblock Exploration by random network distillation.
\newblock \emph{arXiv preprint arXiv:1810.12894}, 2018.

\bibitem[Cen et~al.(2024)Cen, Mei, Goshvadi, Dai, Yang, Yang, Schuurmans, Chi, and Dai]{cen2024value}
Shicong Cen, Jincheng Mei, Katayoon Goshvadi, Hanjun Dai, Tong Yang, Sherry Yang, Dale Schuurmans, Yuejie Chi, and Bo~Dai.
\newblock Value-incentivized preference optimization: A unified approach to online and offline rlhf, 2024.

\bibitem[Cesa-Bianchi et~al.(2009)Cesa-Bianchi, Gentile, and Orabona]{cesa2009robust}
Nicolo Cesa-Bianchi, Claudio Gentile, and Francesco Orabona.
\newblock Robust bounds for classification via selective sampling.
\newblock In \emph{Proceedings of the 26th annual international conference on machine learning}, pages 121--128, 2009.

\bibitem[Chaudhuri et~al.(2015)Chaudhuri, Kakade, Netrapalli, and Sanghavi]{chaudhuri2015convergence}
Kamalika Chaudhuri, Sham~M Kakade, Praneeth Netrapalli, and Sujay Sanghavi.
\newblock Convergence rates of active learning for maximum likelihood estimation.
\newblock \emph{Advances in Neural Information Processing Systems}, 28, 2015.

\bibitem[Chen et~al.(2021)Chen, Tworek, Jun, Yuan, Pinto, Kaplan, Edwards, Burda, Joseph, Brockman, et~al.]{chen2021evaluating}
Mark Chen, Jerry Tworek, Heewoo Jun, Qiming Yuan, Henrique Ponde De~Oliveira Pinto, Jared Kaplan, Harri Edwards, Yuri Burda, Nicholas Joseph, Greg Brockman, et~al.
\newblock Evaluating large language models trained on code.
\newblock \emph{arXiv preprint arXiv:2107.03374}, 2021.

\bibitem[Chen et~al.(2024)Chen, Zhang, Luo, Chai, and Liu]{chen2024pad}
Ruizhe Chen, Xiaotian Zhang, Meng Luo, Wenhao Chai, and Zuozhu Liu.
\newblock Pad: Personalized alignment at decoding-time.
\newblock \emph{arXiv:2410.04070}, 2024.

\bibitem[Chen et~al.(2025{\natexlab{a}})Chen, Chakraborty, Wolf, Paschalidis, and Pacchiano]{chen2025enhancing}
Yilei Chen, Souradip Chakraborty, Lorenz Wolf, Ioannis~Ch Paschalidis, and Aldo Pacchiano.
\newblock Enhancing diversity in large language models via determinantal point processes.
\newblock \emph{arXiv preprint arXiv:2509.04784}, 2025{\natexlab{a}}.

\bibitem[Chen et~al.(2025{\natexlab{b}})Chen, Qin, Wu, Ling, Ye, Zhao, and Shi]{chen2025passk}
Zhipeng Chen, Xiaobo Qin, Youbin Wu, Yue Ling, Qinghao Ye, Wayne~Xin Zhao, and Guang Shi.
\newblock Pass@k training for adaptively balancing exploration and exploitation of large reasoning models, 2025{\natexlab{b}}.
\newblock URL \url{https://arxiv.org/abs/2508.10751}.

\bibitem[Cheng et~al.(2025)Cheng, Huang, Zhu, Dai, Zhao, Zhang, and Wei]{cheng2025reasoning}
Daixuan Cheng, Shaohan Huang, Xuekai Zhu, Bo~Dai, Wayne~Xin Zhao, Zhenliang Zhang, and Furu Wei.
\newblock Reasoning with exploration: An entropy perspective.
\newblock \emph{arXiv preprint arXiv:2506.14758}, 2025.

\bibitem[Chow et~al.(2025)Chow, Tennenholtz, Gur, Zhuang, Dai, Kumar, Agarwal, Thiagarajan, Boutilier, and Faust]{chow2025inference}
Yinlam Chow, Guy Tennenholtz, Izzeddin Gur, Vincent Zhuang, Bo~Dai, Aviral Kumar, Rishabh Agarwal, Sridhar Thiagarajan, Craig Boutilier, and Aleksandra Faust.
\newblock Inference-aware fine-tuning for best-of-n sampling in large language models.
\newblock In \emph{The Thirteenth International Conference on Learning Representations}, 2025.

\bibitem[Chu et~al.(2011)Chu, Li, Reyzin, and Schapire]{chu2011contextual}
Wei Chu, Lihong Li, Lev Reyzin, and Robert~E. Schapire.
\newblock Contextual bandits with linear payoff functions.
\newblock In \emph{International Conference on Artificial Intelligence and Statistics}, 2011.

\bibitem[Clarkson(2010)]{clarkson2010coresets}
Kenneth~L Clarkson.
\newblock Coresets, sparse greedy approximation, and the frank-wolfe algorithm.
\newblock \emph{ACM Transactions on Algorithms (TALG)}, 6\penalty0 (4):\penalty0 1--30, 2010.

\bibitem[Cobbe et~al.(2021)Cobbe, Kosaraju, Bavarian, Chen, Jun, Kaiser, Plappert, Tworek, Hilton, Nakano, et~al.]{cobbe2021training}
Karl Cobbe, Vineet Kosaraju, Mohammad Bavarian, Mark Chen, Heewoo Jun, Lukasz Kaiser, Matthias Plappert, Jerry Tworek, Jacob Hilton, Reiichiro Nakano, et~al.
\newblock Training verifiers to solve math word problems.
\newblock \emph{arXiv:2110.14168}, 2021.

\bibitem[Cohen et~al.(2015)Cohen, Lee, Musco, Musco, Peng, and Sidford]{cohen2015uniform}
Michael~B Cohen, Yin~Tat Lee, Cameron Musco, Christopher Musco, Richard Peng, and Aaron Sidford.
\newblock Uniform sampling for matrix approximation.
\newblock In \emph{Proceedings of the 2015 conference on innovations in theoretical computer science}, pages 181--190, 2015.

\bibitem[Dang et~al.(2025)Dang, Baek, Wen, Kolter, and Raghunathan]{dang2025weight}
Xingyu Dang, Christina Baek, Kaiyue Wen, Zico Kolter, and Aditi Raghunathan.
\newblock Weight ensembling improves reasoning in language models.
\newblock \emph{arXiv:2504.10478}, 2025.

\bibitem[Deep{S}eek-{AI}(2025)]{deepseek2025r1}
Deep{S}eek-{AI}.
\newblock Deep{S}eek-{R}1: Incentivizing {R}easoning {C}apability in {LLM}s via {R}einforcement {L}earning.
\newblock 2025.
\newblock URL \url{https://github.com/deepseek-ai/DeepSeek-R1/blob/main/DeepSeek_R1.pdf}.

\bibitem[Drineas et~al.(2006)Drineas, Mahoney, and Muthukrishnan]{drineas2006subspace}
Petros Drineas, Michael~W Mahoney, and S~Muthukrishnan.
\newblock Subspace sampling and relative-error matrix approximation: Column-row-based methods.
\newblock In \emph{European Symposium on Algorithms}, pages 304--314. Springer, 2006.

\bibitem[Dubey et~al.(2024)Dubey, Jauhri, Pandey, Kadian, Al-Dahle, Letman, Mathur, Schelten, Yang, Fan, et~al.]{dubey2024llama}
Abhimanyu Dubey, Abhinav Jauhri, Abhinav Pandey, Abhishek Kadian, Ahmad Al-Dahle, Aiesha Letman, Akhil Mathur, Alan Schelten, Amy Yang, Angela Fan, et~al.
\newblock The llama 3 herd of models.
\newblock \emph{arXiv:2407.21783}, 2024.

\bibitem[Dwaracherla et~al.(2024)Dwaracherla, Asghari, Hao, and Van~Roy]{dwaracher2024laefficient}
Vikranth Dwaracherla, Seyed~Mohammad Asghari, Botao Hao, and Benjamin Van~Roy.
\newblock Efficient exploration for llms.
\newblock In \emph{Forty-first International Conference on Machine Learning}, 2024.

\bibitem[{ESM Team}(2024)]{esm2024cambrian}
{ESM Team}.
\newblock Esm cambrian: Revealing the mysteries of proteins with unsupervised learning, 2024.
\newblock URL \url{https://evolutionaryscale.ai/blog/esm-cambrian}.

\bibitem[Feldman et~al.(2020)Feldman, Schmidt, and Sohler]{feldman2020turning}
Dan Feldman, Melanie Schmidt, and Christian Sohler.
\newblock Turning big data into tiny data: Constant-size coresets for k-means, pca, and projective clustering.
\newblock \emph{SIAM Journal on Computing}, 49\penalty0 (3):\penalty0 601--657, 2020.

\bibitem[Foster et~al.(2025)Foster, Mhammedi, and Rohatgi]{foster2025good}
Dylan~J Foster, Zakaria Mhammedi, and Dhruv Rohatgi.
\newblock Is a good foundation necessary for efficient reinforcement learning? the computational role of the base model in exploration.
\newblock \emph{Conference on Learning Theory (COLT)}, 2025.

\bibitem[Gandhi et~al.(2025)Gandhi, Chakravarthy, Singh, Lile, and Goodman]{gandhi2025cognitive}
Kanishk Gandhi, Ayush Chakravarthy, Anikait Singh, Nathan Lile, and Noah~D Goodman.
\newblock Cognitive behaviors that enable self-improving reasoners, or, four habits of highly effective stars.
\newblock \emph{arXiv preprint arXiv:2503.01307}, 2025.

\bibitem[Gao et~al.(2025)Gao, Pan, Wang, Zhong, Lu, Cai, Jiang, and Zhao]{gao2025navigate}
Jingtong Gao, Ling Pan, Yejing Wang, Rui Zhong, Chi Lu, Qingpeng Cai, Peng Jiang, and Xiangyu Zhao.
\newblock Navigate the unknown: Enhancing llm reasoning with intrinsic motivation guided exploration.
\newblock \emph{arXiv preprint arXiv:2505.17621}, 2025.

\bibitem[Gu et~al.(2014)Gu, Zhang, and Han]{gu2014batch}
Quanquan Gu, Tong Zhang, and Jiawei Han.
\newblock Batch-mode active learning via error bound minimization.
\newblock In \emph{UAI}, pages 300--309, 2014.

\bibitem[Hayes et~al.(2025)Hayes, Rao, Akin, Sofroniew, Oktay, Lin, Verkuil, Tran, Deaton, Wiggert, Badkundri, Shafkat, Gong, Derry, Molina, Thomas, Khan, Mishra, Kim, Bartie, Nemeth, Hsu, Sercu, Candido, and Rives]{esm3}
Thomas Hayes, Roshan Rao, Halil Akin, Nicholas~J. Sofroniew, Deniz Oktay, Zeming Lin, Robert Verkuil, Vincent~Q. Tran, Jonathan Deaton, Marius Wiggert, Rohil Badkundri, Irhum Shafkat, Jun Gong, Alexander Derry, Raul~S. Molina, Neil Thomas, Yousuf~A. Khan, Chetan Mishra, Carolyn Kim, Liam~J. Bartie, Matthew Nemeth, Patrick~D. Hsu, Tom Sercu, Salvatore Candido, and Alexander Rives.
\newblock Simulating 500 million years of evolution with a language model.
\newblock \emph{Science}, 387\penalty0 (6736):\penalty0 850--858, 2025.
\newblock \doi{10.1126/science.ads0018}.
\newblock URL \url{https://www.science.org/doi/abs/10.1126/science.ads0018}.

\bibitem[He et~al.(2025)He, Fried, and Welleck]{he2025rewarding}
Andre He, Daniel Fried, and Sean Welleck.
\newblock Rewarding the unlikely: Lifting grpo beyond distribution sharpening.
\newblock \emph{arXiv preprint arXiv:2506.02355}, 2025.

\bibitem[Henaff et~al.(2022)Henaff, Raileanu, Jiang, and Rockt{\"a}schel]{henaff2022exploration}
Mikael Henaff, Roberta Raileanu, Minqi Jiang, and Tim Rockt{\"a}schel.
\newblock Exploration via elliptical episodic bonuses.
\newblock \emph{Advances in Neural Information Processing Systems}, 35:\penalty0 37631--37646, 2022.

\bibitem[Hendrycks et~al.(2021)Hendrycks, Burns, Kadavath, Arora, Basart, Tang, Song, and Steinhardt]{hendrycks2measuring}
Dan Hendrycks, Collin Burns, Saurav Kadavath, Akul Arora, Steven Basart, Eric Tang, Dawn Song, and Jacob Steinhardt.
\newblock Measuring mathematical problem solving with the math dataset.
\newblock In \emph{Thirty-fifth Conference on Neural Information Processing Systems Datasets and Benchmarks Track (Round 2)}, 2021.

\bibitem[Holtzman et~al.(2020)Holtzman, Buys, Du, Forbes, and Choi]{holtzmancurious}
Ari Holtzman, Jan Buys, Li~Du, Maxwell Forbes, and Yejin Choi.
\newblock The curious case of neural text degeneration.
\newblock In \emph{International Conference on Learning Representations}, 2020.

\bibitem[Huang et~al.(2025)Huang, Block, Foster, Rohatgi, Zhang, Simchowitz, Ash, and Krishnamurthy]{huang2025self}
Audrey Huang, Adam Block, Dylan~J Foster, Dhruv Rohatgi, Cyril Zhang, Max Simchowitz, Jordan~T Ash, and Akshay Krishnamurthy.
\newblock Self-improvement in language models: The sharpening mechanism.
\newblock \emph{International Conference on Learning Representations (ICLR)}, 2025.

\bibitem[Ivison et~al.(2025)Ivison, Zhang, Brahman, Koh, and Dasigi]{ivison2025large}
Hamish Ivison, Muru Zhang, Faeze Brahman, Pang~Wei Koh, and Pradeep Dasigi.
\newblock Large-scale data selection for instruction tuning.
\newblock \emph{arXiv preprint arXiv:2503.01807}, 2025.

\bibitem[Jiang et~al.(2023)Jiang, Sablayrolles, Mensch, Bamford, Chaplot, de~Las~Casas, Bressand, Lengyel, Lample, Saulnier, et~al.]{mistral7b}
AQ~Jiang, A~Sablayrolles, A~Mensch, C~Bamford, DS~Chaplot, D~de~Las~Casas, F~Bressand, G~Lengyel, G~Lample, L~Saulnier, et~al.
\newblock Mistral 7b. corr, abs/2310.06825, 2023. doi: 10.48550.
\newblock \emph{arXiv preprint ARXIV.2310.06825}, 10, 2023.

\bibitem[Jinnai et~al.(2024)Jinnai, Morimura, Ariu, and Abe]{jinnai2024regularized}
Yuu Jinnai, Tetsuro Morimura, Kaito Ariu, and Kenshi Abe.
\newblock Regularized best-of-n sampling to mitigate reward hacking for language model alignment.
\newblock \emph{arXiv:2404.01054}, 2024.

\bibitem[Karan and Du(2025)]{karan2025reasoning}
Aayush Karan and Yilun Du.
\newblock Reasoning with sampling: Your base model is smarter than you think.
\newblock \emph{arXiv preprint arXiv:2510.14901}, 2025.

\bibitem[Khanov et~al.(2024)Khanov, Burapacheep, and Li]{khanov2024args}
Maxim Khanov, Jirayu Burapacheep, and Yixuan Li.
\newblock Args: Alignment as reward-guided search.
\newblock \emph{arXiv:2402.01694}, 2024.

\bibitem[Kiefer and Wolfowitz(1960)]{kiefer1960equivalence}
Jack Kiefer and Jacob Wolfowitz.
\newblock The equivalence of two extremum problems.
\newblock \emph{Canadian Journal of Mathematics}, 12:\penalty0 363--366, 1960.

\bibitem[Kwon et~al.(2023)Kwon, Li, Zhuang, Sheng, Zheng, Yu, Gonzalez, Zhang, and Stoica]{kwon2023efficient}
Woosuk Kwon, Zhuohan Li, Siyuan Zhuang, Ying Sheng, Lianmin Zheng, Cody~Hao Yu, Joseph~E. Gonzalez, Hao Zhang, and Ion Stoica.
\newblock Efficient memory management for large language model serving with pagedattention.
\newblock In \emph{Proceedings of the ACM SIGOPS 29th Symposium on Operating Systems Principles}, 2023.

\bibitem[Lanchantin et~al.(2025)Lanchantin, Chen, Dhuliawala, Yu, Weston, Sukhbaatar, and Kulikov]{lanchantin2025diverse}
Jack Lanchantin, Angelica Chen, Shehzaad Dhuliawala, Ping Yu, Jason Weston, Sainbayar Sukhbaatar, and Ilia Kulikov.
\newblock Diverse preference optimization.
\newblock \emph{arXiv preprint arXiv:2501.18101}, 2025.

\bibitem[Lattimore and Szepesv{\'a}ri(2020)]{lattimore2020bandit}
Tor Lattimore and Csaba Szepesv{\'a}ri.
\newblock \emph{Bandit algorithms}.
\newblock Cambridge University Press, 2020.

\bibitem[Li et~al.(2006)Li, Hastie, and Church]{Li2006VerySR}
Ping Li, Trevor~J. Hastie, and Kenneth~Ward Church.
\newblock Very sparse random projections.
\newblock In \emph{Knowledge Discovery and Data Mining}, 2006.
\newblock URL \url{https://api.semanticscholar.org/CorpusID:7995734}.

\bibitem[Li et~al.(2025)Li, Zhang, Yu, Saha, Khashabi, Weston, Lanchantin, and Wang]{li2025jointly}
Tianjian Li, Yiming Zhang, Ping Yu, Swarnadeep Saha, Daniel Khashabi, Jason Weston, Jack Lanchantin, and Tianlu Wang.
\newblock Jointly reinforcing diversity and quality in language model generations.
\newblock \emph{arXiv preprint arXiv:2509.02534}, 2025.

\bibitem[Lightman et~al.(2023)Lightman, Kosaraju, Burda, Edwards, Baker, Lee, Leike, Schulman, Sutskever, and Cobbe]{lightman2023lets}
Hunter Lightman, Vineet Kosaraju, Yura Burda, Harri Edwards, Bowen Baker, Teddy Lee, Jan Leike, John Schulman, Ilya Sutskever, and Karl Cobbe.
\newblock Let's verify step by step.
\newblock \emph{arXiv preprint arXiv:2305.20050}, 2023.

\bibitem[Lin et~al.(2023)Lin, Akin, Rao, Hie, Zhu, Lu, Smetanin, Verkuil, Kabeli, Shmueli, dos Santos~Costa, Fazel-Zarandi, Sercu, Candido, and Rives]{esm_fold}
Zeming Lin, Halil Akin, Roshan Rao, Brian Hie, Zhongkai Zhu, Wenting Lu, Nikita Smetanin, Robert Verkuil, Ori Kabeli, Yaniv Shmueli, Allan dos Santos~Costa, Maryam Fazel-Zarandi, Tom Sercu, Salvatore Candido, and Alexander Rives.
\newblock Evolutionary-scale prediction of atomic-level protein structure with a language model.
\newblock \emph{Science}, 379\penalty0 (6637):\penalty0 1123--1130, 2023.
\newblock \doi{10.1126/science.ade2574}.
\newblock URL \url{https://www.science.org/doi/abs/10.1126/science.ade2574}.

\bibitem[Liu et~al.(2023)Liu, Xia, Wang, and Zhang]{liu2023your}
Jiawei Liu, Chunqiu~Steven Xia, Yuyao Wang, and Lingming Zhang.
\newblock Is your code generated by chatgpt really correct? rigorous evaluation of large language models for code generation.
\newblock \emph{Advances in Neural Information Processing Systems}, 36:\penalty0 21558--21572, 2023.

\bibitem[Liu et~al.(2025)Liu, Diao, Lu, Hu, Dong, Choi, Kautz, and Dong]{liu2025prorl}
Mingjie Liu, Shizhe Diao, Ximing Lu, Jian Hu, Xin Dong, Yejin Choi, Jan Kautz, and Yi~Dong.
\newblock Prorl: Prolonged reinforcement learning expands reasoning boundaries in large language models.
\newblock \emph{arXiv preprint arXiv:2505.24864}, 2025.

\bibitem[Liu et~al.(2024{\natexlab{a}})Liu, Guo, Bianco, Calandriello, Berthet, Llinares, Hoffmann, Dixon, Valko, and Blondel]{liu2024decoding}
Tianlin Liu, Shangmin Guo, Leonardo Bianco, Daniele Calandriello, Quentin Berthet, Felipe Llinares, Jessica Hoffmann, Lucas Dixon, Michal Valko, and Mathieu Blondel.
\newblock Decoding-time realignment of language models.
\newblock \emph{arXiv:2402.02992}, 2024{\natexlab{a}}.

\bibitem[Liu et~al.(2024{\natexlab{b}})Liu, Chen, Du, Lee, and Lin]{liu2024sample}
Zichen Liu, Changyu Chen, Chao Du, Wee~Sun Lee, and Min Lin.
\newblock Sample-efficient alignment for llms.
\newblock \emph{arXiv preprint arXiv:2411.01493}, 2024{\natexlab{b}}.

\bibitem[Minh et~al.(2025)Minh, Baker, Neo, Roush, Kirsch, and Shwartz-Ziv]{minhturning}
Nguyen~Nhat Minh, Andrew Baker, Clement Neo, Allen~G Roush, Andreas Kirsch, and Ravid Shwartz-Ziv.
\newblock Turning up the heat: Min-p sampling for creative and coherent llm outputs.
\newblock In \emph{The Thirteenth International Conference on Learning Representations}, 2025.

\bibitem[OpenAI(2024)]{openai2024o1}
OpenAI.
\newblock Introducing openai o1.
\newblock \emph{Blog}, 2024.
\newblock URL \url{https://openai.com/o1/}.

\bibitem[Osband et~al.(2019)Osband, Van~Roy, Russo, and Wen]{osband2019deep}
Ian Osband, Benjamin Van~Roy, Daniel~J Russo, and Zheng Wen.
\newblock Deep exploration via randomized value functions.
\newblock \emph{Journal of Machine Learning Research}, 20\penalty0 (124):\penalty0 1--62, 2019.

\bibitem[Pathak et~al.(2017)Pathak, Agrawal, Efros, and Darrell]{pathak2017curiosity}
Deepak Pathak, Pulkit Agrawal, Alexei~A Efros, and Trevor Darrell.
\newblock Curiosity-driven exploration by self-supervised prediction.
\newblock In \emph{International conference on machine learning}, pages 2778--2787. PMLR, 2017.

\bibitem[Pukelsheim(2006)]{pukelsheim2006optimal}
Friedrich Pukelsheim.
\newblock \emph{Optimal design of experiments}.
\newblock SIAM, 2006.

\bibitem[Qwen et~al.(2024)Qwen, Yang, Zhang, Hui, Zheng, Yu, Li, Liu, Huang, Wei, et~al.]{qwen2p5}
A~Yang Qwen, Baosong Yang, B~Zhang, B~Hui, B~Zheng, B~Yu, Chengpeng Li, D~Liu, F~Huang, H~Wei, et~al.
\newblock Qwen2.5 technical report.
\newblock \emph{arXiv preprint}, 2024.

\bibitem[Rafailov et~al.(2023)Rafailov, Sharma, Mitchell, Manning, Ermon, and Finn]{rafailov2024direct}
Rafael Rafailov, Archit Sharma, Eric Mitchell, Christopher~D Manning, Stefano Ermon, and Chelsea Finn.
\newblock Direct preference optimization: Your language model is secretly a reward model.
\newblock \emph{Advances in Neural Information Processing Systems}, 2023.

\bibitem[Rohatgi and Saleh(2015)]{rohatgi2015introduction}
V.~K. Rohatgi and A.~K. Md.~Ehsanes Saleh.
\newblock \emph{An Introduction to Probability and Statistics}.
\newblock John Wiley \& Sons, Inc., 3rd edition, 2015.

\bibitem[Saran et~al.(2023)Saran, Yousefi, Krishnamurthy, Langford, and Ash]{saran2023streaming}
Akanksha Saran, Safoora Yousefi, Akshay Krishnamurthy, John Langford, and Jordan~T Ash.
\newblock Streaming active learning with deep neural networks.
\newblock In \emph{International Conference on Machine Learning}, pages 30005--30021. PMLR, 2023.

\bibitem[Schulman et~al.(2017)Schulman, Wolski, Dhariwal, Radford, and Klimov]{schulman2017proximal}
John Schulman, Filip Wolski, Prafulla Dhariwal, Alec Radford, and Oleg Klimov.
\newblock Proximal policy optimization algorithms.
\newblock \emph{arXiv:1707.06347}, 2017.

\bibitem[Setlur et~al.(2024)Setlur, Nagpal, Fisch, Geng, Eisenstein, Agarwal, Agarwal, Berant, and Kumar]{setlur2024rewarding}
Amrith Setlur, Chirag Nagpal, Adam Fisch, Xinyang Geng, Jacob Eisenstein, Rishabh Agarwal, Alekh Agarwal, Jonathan Berant, and Aviral Kumar.
\newblock Rewarding progress: Scaling automated process verifiers for llm reasoning.
\newblock \emph{arXiv preprint arXiv:2410.08146}, 2024.

\bibitem[Setlur et~al.(2025)Setlur, Yang, Snell, Greer, Wu, Smith, Simchowitz, and Kumar]{setlur2025e3}
Amrith Setlur, Matthew~YR Yang, Charlie Snell, Jeremy Greer, Ian Wu, Virginia Smith, Max Simchowitz, and Aviral Kumar.
\newblock e3: Learning to explore enables extrapolation of test-time compute for llms.
\newblock \emph{arXiv preprint arXiv:2506.09026}, 2025.

\bibitem[Shao et~al.(2024)Shao, Wang, Zhu, Xu, Song, Bi, Zhang, Zhang, Li, Wu, et~al.]{shao2024deepseekmath}
Zhihong Shao, Peiyi Wang, Qihao Zhu, Runxin Xu, Junxiao Song, Xiao Bi, Haowei Zhang, Mingchuan Zhang, YK~Li, Yang Wu, et~al.
\newblock Deepseekmath: Pushing the limits of mathematical reasoning in open language models.
\newblock \emph{arXiv preprint arXiv:2402.03300}, 2024.

\bibitem[Sheng et~al.(2024)Sheng, Zhang, Ye, Wu, Zhang, Zhang, Peng, Lin, and Wu]{sheng2024hybridflow}
Guangming Sheng, Chi Zhang, Zilingfeng Ye, Xibin Wu, Wang Zhang, Ru~Zhang, Yanghua Peng, Haibin Lin, and Chuan Wu.
\newblock Hybridflow: A flexible and efficient rlhf framework.
\newblock \emph{arXiv preprint arXiv: 2409.19256}, 2024.

\bibitem[Shi et~al.(2024{\natexlab{a}})Shi, Chen, Hu, Liu, Smith, Hajishirzi, and Du]{shi2024decoding}
Ruizhe Shi, Yifang Chen, Yushi Hu, ALisa Liu, Noah Smith, Hannaneh Hajishirzi, and Simon Du.
\newblock Decoding-time language model alignment with multiple objectives.
\newblock \emph{arXiv:2406.18853}, 2024{\natexlab{a}}.

\bibitem[Shi et~al.(2024{\natexlab{b}})Shi, Zhou, and Du]{shi2024crucial}
Ruizhe Shi, Runlong Zhou, and Simon~S Du.
\newblock The crucial role of samplers in online direct preference optimization.
\newblock \emph{arXiv preprint arXiv:2409.19605}, 2024{\natexlab{b}}.

\bibitem[Song et~al.(2025)Song, Kempe, and Munos]{song2025outcome}
Yuda Song, Julia Kempe, and Remi Munos.
\newblock Outcome-based exploration for llm reasoning.
\newblock \emph{arXiv preprint arXiv:2509.06941}, 2025.

\bibitem[Tang et~al.(2017)Tang, Houthooft, Foote, Stooke, Xi~Chen, Duan, Schulman, DeTurck, and Abbeel]{tang2017exploration}
Haoran Tang, Rein Houthooft, Davis Foote, Adam Stooke, OpenAI Xi~Chen, Yan Duan, John Schulman, Filip DeTurck, and Pieter Abbeel.
\newblock \# exploration: A study of count-based exploration for deep reinforcement learning.
\newblock \emph{Advances in neural information processing systems}, 30, 2017.

\bibitem[Tang et~al.(2025)Tang, Zheng, Synnaeve, and Munos]{tangoptimizing}
Yunhao Tang, Kunhao Zheng, Gabriel Synnaeve, and Remi Munos.
\newblock Optimizing language models for inference time objectives using reinforcement learning.
\newblock In \emph{Forty-second International Conference on Machine Learning}, 2025.

\bibitem[Vetterling and Press(1992)]{vetterling1992numerical}
William~T Vetterling and William~H Press.
\newblock \emph{Numerical recipes: example book C}.
\newblock Cambridge University Press, 1992.

\bibitem[Walder and Karkhanis(2025)]{walder2025pass}
Christian Walder and Deep Karkhanis.
\newblock Pass@ k policy optimization: Solving harder reinforcement learning problems.
\newblock \emph{arXiv preprint arXiv:2505.15201}, 2025.

\bibitem[Wu et~al.(2025)Wu, Xuan, Lu, Harchaoui, and Choi]{wu2025invisible}
Fang Wu, Weihao Xuan, Ximing Lu, Zaid Harchaoui, and Yejin Choi.
\newblock The invisible leash: Why rlvr may not escape its origin.
\newblock \emph{arXiv preprint arXiv:2507.14843}, 2025.

\bibitem[Xie et~al.(2024)Xie, Foster, Krishnamurthy, Rosset, Awadallah, and Rakhlin]{xie2024exploratory}
Tengyang Xie, Dylan~J Foster, Akshay Krishnamurthy, Corby Rosset, Ahmed Awadallah, and Alexander Rakhlin.
\newblock Exploratory preference optimization: Harnessing implicit {Q}*-approximation for sample-efficient {RLHF}.
\newblock \emph{arXiv:2405.21046}, 2024.

\bibitem[Xu et~al.(2025)Xu, Nie, Zheng, Modi, Swaminathan, and Cheng]{xu2025provably}
Wanqiao Xu, Allen Nie, Ruijie Zheng, Aditya Modi, Adith Swaminathan, and Ching-An Cheng.
\newblock Provably learning from language feedback.
\newblock \emph{arXiv preprint arXiv:2506.10341}, 2025.

\bibitem[Yang et~al.(2025)Yang, Guo, Huang, Wang, Xie, Wang, Liang, and Tang]{yang2025depth}
Zhicheng Yang, Zhijiang Guo, Yinya Huang, Yongxin Wang, Dongchun Xie, Yiwei Wang, Xiaodan Liang, and Jing Tang.
\newblock Depth-breadth synergy in rlvr: Unlocking llm reasoning gains with adaptive exploration.
\newblock \emph{arXiv preprint arXiv:2508.13755}, 2025.

\bibitem[Yao et~al.(2023)Yao, Yu, Zhao, Shafran, Griffiths, Cao, and Narasimhan]{yao2023tree}
Shunyu Yao, Dian Yu, Jeffrey Zhao, Izhak Shafran, Tom Griffiths, Yuan Cao, and Karthik Narasimhan.
\newblock Tree of thoughts: Deliberate problem solving with large language models.
\newblock \emph{Advances in neural information processing systems}, 36:\penalty0 11809--11822, 2023.

\bibitem[Yu et~al.(2025)Yu, Zhang, Zhu, Yuan, Zuo, Yue, Dai, Fan, Liu, Liu, et~al.]{yu2025dapo}
Qiying Yu, Zheng Zhang, Ruofei Zhu, Yufeng Yuan, Xiaochen Zuo, Yu~Yue, Weinan Dai, Tiantian Fan, Gaohong Liu, Lingjun Liu, et~al.
\newblock Dapo: An open-source llm reinforcement learning system at scale.
\newblock \emph{arXiv preprint arXiv:2503.14476}, 2025.

\bibitem[Yue et~al.(2025)Yue, Chen, Lu, Zhao, Wang, Song, and Huang]{yue2025does}
Yang Yue, Zhiqi Chen, Rui Lu, Andrew Zhao, Zhaokai Wang, Shiji Song, and Gao Huang.
\newblock Does reinforcement learning really incentivize reasoning capacity in llms beyond the base model?
\newblock \emph{arXiv preprint arXiv:2504.13837}, 2025.

\bibitem[Zhang et~al.(2024)Zhang, Yu, Sharma, Yang, Wang, Hassan, and Wang]{zhang2024self}
Shenao Zhang, Donghan Yu, Hiteshi Sharma, Ziyi Yang, Shuohang Wang, Hany Hassan, and Zhaoran Wang.
\newblock Self-exploring language models: Active preference elicitation for online alignment, 2024.

\bibitem[Zhou et~al.(2025)Zhou, Li, Zhang, and Leqi]{zhou2025expo}
Ruiyang Zhou, Shuozhe Li, Amy Zhang, and Liu Leqi.
\newblock Expo: Unlocking hard reasoning with self-explanation-guided reinforcement learning.
\newblock \emph{arXiv preprint arXiv:2507.02834}, 2025.

\end{thebibliography}

\clearpage

\appendix

\section{Additional Related Work}
\label{sec:additional_related}

\paragraph{Exploration at test time} Test-time alignment techniques for language models are an active area of research with many complementary threads \citep{khanov2024args,chen2024pad,shi2024decoding,liu2024decoding,jinnai2024regularized,shi2024crucial}, but exploration has not typically been the focus of this line of work.\loose

Most closely related to our work, \citet{setlur2025e3}, propose a test-time exploration approach based on the idea of \emph{learning to explore in-context}. They propose to encourage exploration within a long chain of thought by
training the LLM to chain operations such as generation, verification, and refinement together in search of a solution. This is somewhat complementary to our inference-time exploration framework, which aims to improve diversity across parallel generations once the model is fixed; these techniques could potentially be combined.\loose

Also related, \citet{xu2025provably} consider the problem of learning from language (non-verifiable) feedback, and propose an iterative prompting approach to enable exploration at test time; their work focuses on simpler exploration domains, but with more difficult, implicit feedback.

\paragraph{Exploration in RL post-training} Exploration in RL post-training for reasoning is a growing area of research, motivated by the observation that standard techniques tend to simply sharpen responses already covered by the base model \citep{yue2025does,gandhi2025cognitive,wu2025invisible}. A number of recent works, discussed below, aim to improve diversity and expand the reasoning frontier by incorporating bonuses into the GRPO objective or by otherwise augmenting it. Briefly, our work is unique in terms of (1) the specific representation-based diversity objective we focus on, and (2) our focus on \emph{inference-time exploration} as a means to validate diversity metrics before applying them to post-training. \loose

\citet{he2025rewarding} introduce an \emph{unlikeliness reward} to GRPO, which reweights the reward by ranking generations according to their unlikeliness under the sampling policy. Unlikeliness reward is a form of diversity metric, similar to our representation-based diversity metrics. \citet{cheng2025reasoning} observe that high-entropy (high uncertainty) tokens in the model's output often correspond to critical reasoning steps, and augment the GRPO objective with entropy bonuses to encourage exploration at these high-entropy steps. Entropy can be seen as another form of diversity metric in our setup. Another option is to \emph{learn} the diversity metric as in~\citet{li2025jointly}, who use a learned classifier to determine whether a pair of responses is semantically equivalent. They then use the classifier score to scale the reward in GRPO to joinly optimize quality and diversity. Our work, in contrast, does not require training any auxiliary models for computing diversity.

Various works \citep{balashankar2024infalign,chow2025inference,chen2025passk,walder2025pass,tangoptimizing} formulate the problem of directly post-training to maximize the \passatk objective, deriving approximate gradient estimators and using them for policy optimization. As discussed in \citet{chow2025inference,chen2025passk}, these gradient estimators \emph{implicitly} encourage exploration, since they allow the model to distribute probability mass across a more diverse range of responses when it is uncertain about the correct answer. Our work instead focuses on using the language model representations to \emph{deliberately} incentivize novel behaviors, including in a novel inference-time setting. \loose

\citet{zhou2025expo} consider a setting where ground truth answers are available (as opposed to just rewards), and propose to encourage exploration by prompting the model to generate self-explanations for the ground truth answers and incorporating this as an SFT term in the GRPO loss. Unlike our method, their approach does not directly optimize for diversity, and cannot be used in settings where ground truth answers are unavailable (e.g. coding).\loose

\citet{yang2025depth} investigate the role of ``breadth" (batch size) and ``depth'' (number of rollouts) in RLVR. They show that increasing breadth through full batch updates and increasing depth through more rollouts for harder questions has complementary benefits and overall improves pass@1 and pass@k performance. We view this as orthogonal and potentially complementary to our work.

\citet{lanchantin2025diverse} introduce diverse preference optimization (DivPO), an alignment method to optimize for both quality and diversity. In contrast to our work, their method is designed for the RLHF setting and applied to non-reasoning tasks (e.g. creative writing).

Concurrent work of \citet{song2025outcome} adapts tabular UCB-style bonuses to language model post-training with GRPO, but their approach---unlike representation-based exploration---is only suitable for domains with a small, discrete set of possible outcomes. Other concurrent work of~\citet{chen2025enhancing} optimizes a diversity term along with the rewards, where the diversity term captures the volume of the responses in representation space by computing the determinant of the gram matrix. While related, our method instead adds a leverage-based score to the reward of each individual response.

Lastly, \citet{liu2025prorl} take a complementary approach and aim to incentivize reasoning beyond the base model through (1) prolonged RL training (increasing the overall amount of training steps), and (2) periodically resetting the reference model; they show that this can increase \passatk performance beyond the base model in a variety of reasoning tasks. This approach is complementary, and could likely be combined with our techniques.\loose

\paragraph{Representation-based diversity}
Our findings regarding benefits of inference-time exploration with representation-based diversity parallel the findings of \citet{ivison2025large}, who evaluated the effectiveness of various data selection schemes for instruction tuning, and found a similar representation-based scheme to be the most effective when normalized for compute. In addition, there is a long line of work relying on representation-based exploration for RL through elliptic bonuses in non-LLM settings~\citep{agarwal2020pc,agarwal2020flambe,henaff2022exploration,ash2021anti}.

\paragraph{Adapting exploration techniques from deep reinforcement learning}
Various papers have adapted exploration techniques from deep learning to language models, including \citet{bai2025online} (count-based exploration), \citet{gao2025navigate} (random network distillation), and \citet{liu2024sample,dwaracher2024laefficient} (posterior sampling).\footnote{See also \citet{arumugam2025toward}, which uses a pre-trained model to simulate posterior sampling in-context for multi-turn sequential decision making tasks.} These works show initial promise in terms of sample complexity benefits, but their potential to explore beyond the base model in reasoning domains has not been evaluated to our knowledge. In addition, these methods require additional learning machinery (e.g., auxiliary networks), which introduces significant complexity when scaling to language models.\loose

\paragraph{Theoretical analysis of language model exploration}
On the theoretical side, our work draws on \citet{foster2025good}, who prove that test-time exploration with representation-based diversity has provable computational benefits in a simplified linear setting. Our \repexp algorithm for test-time exploration can be viewed as a simplified, practical adaptation of their theoretical algorithm.\loose

Other theoretical works on exploration with language models include the \xpo algorithm of \citet{xie2024exploratory} and related algorithms by \citet{cen2024value,zhang2024self},\footnote{\citet{cen2024value,zhang2024self} concurrently proposed
  similar algorithms to \xpo, but did not provide non-trivial
  theoretical guarantees (e.g., guarantees that indicate benefits over
  purely passive exploration).} which augment the Online DPO objective with exploration bonuses inspired by the optimism principle. To our knowledge, these techniques have only been evaluated on RLHF tasks, and \citet{foster2025good} show that there may be computational barriers to implementing them in a way that is faithful to the theoretical guarantees.

\section{Details for Inference-Time Experiments (\cref{sec:inference_time_exploration})}
\label{sec:appendix_inference_time}
\subsection{Details from \cref{sec:coresets}}
\label{sec:appendix_coresets}

\paragraph{Hyperparameters} In ~\cref{alg:elliptic_coreset}, we set $\lambda = 1.0$. For all models and tasks, we perform a sparse projection from the respective model hidden dimension to $d = 512$.

\paragraph{Preprocessing} In~\cref{alg:elliptic_coreset}, after obtaining the representations $\bar{h}_\theta$ for every generation $y$ for a fixed prompt $x$, we sparse project all representations down and then mean-center where the mean is taken across the response-level representations. 

\paragraph{Datasets} Below we provide a brief overview of all datasets along with relevant numerical details. Note that "vanilla" sampling settings refer to $\tau = 1.0$, $\text{top-p} = 1.0$, and $\text{min-p} = 0.0$. We also do not use top-k sampling in any of the coreset experiments. Finally, we only use the test split of every dataset for all our inference-time experiments, unless specified otherwise.
\begin{itemize}
    \item \texttt{MATH}. This dataset contains 12.5k problems from high school math competitions, split into 7.5k training examples and 5k test examples. For each question in the test split, we generate $6400$ responses using vanilla settings and set the maximum response length per generation to $512$ tokens.
    
    \item \texttt{GSM8K}. This dataset contains 8.79k grade school math word problems, split into 7.47k training examples and 1.32k test examples. For each question in the test split, we generate $6400$ responses using vanilla settings and set the maximum response length per generation to $512$ tokens.
    
    \item \texttt{MBPP+}. This dataset contains 378 basic Python programming problems that are a curated subset of the full \texttt{MBPP}~\citep{austin2021program} dataset with more test cases. Since the dataset does not come with any train or test splits, we use the full set of questions for our experiments. For each problem, we generate $6400$ responses using vanilla settings, except that we set $\text{top-p}=0.95$. We set the maximum response length per generation to $768$ tokens.
    
    \item \texttt{Game of 24}. This dataset contains 1.36k questions that specify four integers that need to be combined using basic arithmetic operations $(+, -, x, /)$ to equal $24$. For each question, we generate $6400$ responses using vanilla settings and set the maximum response length per generation to $512$ tokens.  We use the version available at \url{https://huggingface.co/datasets/nlile/24-game}.
    
    \item \texttt{AIME 2025}. This dataset contains the $30$ problems taken directly from the 2025 edition of the American Invitational Mathematics Examination (AIME). For each question, we generate $8192$ responses using vanilla settings and set the maximum response length per generation to $8192$ tokens.

\end{itemize}

\paragraph{Experiment resources} We used vLLM~\citep{kwon2023efficient} on $1-2$ (depending on the size of the model) NVIDIA A100 40GB GPUs per model-task pair to generate the data pools for all questions in the dataset.

\paragraph{Estimating samples-to-correct} For random sampling, we estimate the average number of samples to take (without replacement) from the data pool to find the first correct one as:
\[
    \text{samples-to-correct} = \frac{N + 1}{c + 1},
\]

where $N$ indicates the size of the data pool and $c$ indicates the number of correct samples in the pool. Please refer to~\citet{rohatgi2015introduction} for a proof. For the representation-based exploration algorithm described in~\cref{alg:elliptic_coreset}, we perform $5$ trials per question where we record the samples-to-correct for each and take their average.

\paragraph{Estimating pass@k} To compute the pass@k values plotted in Figure~\ref{fig:different_candidate_pools}, we follow~\citet{chen2021evaluating} and use the following unbiased estimator:
\[
    \text{pass@k} = \mathbb{E}_{\mathcal{D}}\left[  1 - \frac{\binom{n - c}{k}}{\binom{n}{k}}\right] = \frac{1}{|\mathcal{D}|} \sum_{i = 1}^{|\mathcal{D}|} \left[  1 - \frac{\binom{n - c}{k}}{\binom{n}{k}}\right],
\]

where $\mathcal{D}$ indicates the dataset and $|\mathcal{D}|$ indicates its size.

\subsection{Details from \cref{sec:guided}}
\label{sec:appendix_guided}

\paragraph{Hyperparameters} Similarly to \cref{sec:coresets}, we initialize $\Sigma_{(0)}^{-1} = \lambda^{-1} I_d$. We set $\lambda = 0.1$. For all values of $\beta$, we use $\text{top-p} = 0.95$ and $\text{top-k} = 128$. At every time step $t$, we use a batch size of 64 to compute $h_{\theta}(x, y_{<t}, v_j)$ for all $v_j \in V$ where $v_j > -\infty$ (note that a logit $v_j$ is set to $-\infty$ if it gets filtered out by either the top-p or top-k filters mentioned earlier).

\paragraph{Computational expense} Note that computing the bonus $\mathbf{b}(x, y_{<t}, V)$ requires one forward pass through the model per token in the vocabulary at every time step $t$. While these can be batched together since all tokens share the same prefix, this is still prohibitively time and memory intensive. To mitigate this, we combine this method with nucleus and top-k sampling such that the bonus will only need to be computed for at most $k \ll V$ tokens.

\paragraph{Dataset construction} Due to the large computational cost of experiments, we only focus on \texttt{MATH} using \texttt{Qwen-2.5-7b-Instruct}. In addition, we do not evaluate on the full test split of \texttt{MATH}, but instead use a subset consisting of the $200$ hardest questions as ranked by \texttt{GPT-4o mini}. Specifically, we sampled $1024$ responses for each question in the \texttt{MATH} test split using \texttt{GPT-4o mini} to estimate the per-question pass@1. We threw out all questions for which the pass@1 was $0$ (indicating not a single response was correct among all $1024$), sorted the remaining questions, and kept the $200$ questions with the lowest pass@1.

\paragraph{Estimating samples-to-correct} We collected up to $1024$ generations per question (stopping early once the correct answer was found). Since the resulting samples-to-correct value found can have high variance due to the inherent randomness in the generation process, we repeat this process $5$ times with a different seed every time. This then results in a total of $200 \times 5 = 1000$ data points, minus a few data points that didn't finish running in time, for an effective total of $985$ data points used in~\cref{fig:token_level}.

\paragraph{Experiment resources} Experiments were run on a combination of NVIDIA A100 40GB, NVIDIA A100 80GB, and NVIDIA H100 80GB GPUs. Every run indicates one seed for a fixed question and performs up to $1024$ generations. We used one GPU (one of the several mentioned earlier) per run, adjusting the forward batch size to compute the elliptical bonus down from $64$ to $32$ when using the 40GB GPUs.

\paragraph{Computing the inverse covariance} The inverse covariance matrix $\Sigma^{-1}_{(i)}$ includes all \emph{mean-centered} hidden representations for all generated tokens in all $i-1$ complete sequences generated so far for a fixed prompt $x$. Note that the mean $\mu^{(i)}$ of the raw hidden representations $h_{\theta}$ is computed as:
\[\
    \mu^{(i)} = \frac{1}{H} \sum_{j = 0}^{i - 1}\sum_{t = 1}^{T_j} h_{\theta}(x, y_{<t}^{(j)}), \quad H = \sum_{j = 0}^{i - 1} T_j,
\]
where $T_j$ indicates the length (in number of tokens) of response $j$. Because of this, the mean changes after every generation $i$. This means some care is required when computing the inverse covariance matrix $\Sigma^{-1}_{(i)}$ with mean-centered hidden representations. To account for this, we will separately keep track of the inverse covariance matrix $\tilde{\Sigma}^{-1}_{(i)}$ with \emph{non}-mean-centered hidden representations as well as the mean $\mu^{(i)}$. Then, we compute the mean-centered inverse covariance matrix $\Sigma^{-1}_{(i)}$ as

\[
    \Sigma^{-1}_{(i)} = \tilde{\Sigma}_{(i)}^{-1} - \left(\frac{\tilde{\Sigma}_{(i)}^{-1}\mu^{(i)}\mu_{(i)}^T\tilde{\Sigma}_{(i)}^{-1}}{-1/H + \mu_{(i)}^T\tilde{\Sigma}_{(i)}^{-1}\mu^{(i)}}\right).
\]

This result is immediate from Proposition~\ref{prop:rank_one}.

\arxiv{
\begin{figure}[t]
    \centering
    \includegraphics[width=\linewidth]{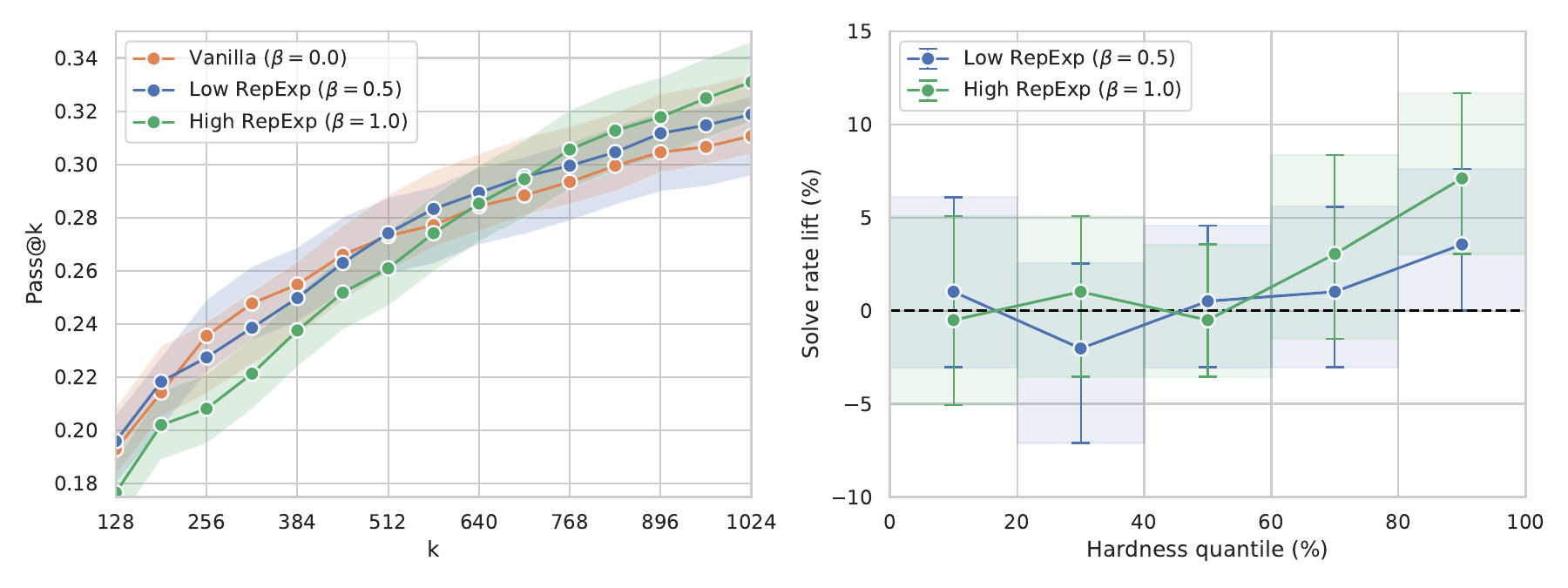}
    \vspace{-0.5cm}
    \caption{\textbf{Representation-based exploration at the token level}, compared to naive autoregressive generation ($\beta=0$) for inference-time exploration. \textbf{\textit{(Left)}} We added shaded areas indicating one standard error to the left side of~\cref{fig:token_level}. \textbf{\textit{(Right)}} Compared to the right side of~\cref{fig:token_level}, we also added $\beta = 0.5$}.
    \vspace{-0.5cm}\label{fig:token_level_full}
\end{figure}
}

\begin{proposition}
    \label{prop:rank_one}
    Given the current generation step $i$, the non-mean-centered inverse data
    covariance matrix $\tilde{\Sigma}^{-1}_{(i)}$ and the current mean $\mu^{(i)}$,
    the mean-centered inverse data covariance matrix $\Sigma^{-1}_{(i)}$ is given
    as
    \begin{equation*}
      \Sigma^{-1}_{(i)} = \tilde{\Sigma}_{(i)}^{-1} - \left(\frac{\tilde{\Sigma}_{(i)}^{-1}\mu^{(i)}\mu_{(i)}^T\tilde{\Sigma}_{(i)}^{-1}}{-1/H + \mu_{(i)}^T\tilde{\Sigma}_{(i)}^{-1}\mu^{(i)}}\right), \quad H = \sum_{j = 0}^{i - 1} T_j.
    \end{equation*}
    Here, $T_j$ indicates the length (in number of tokens) of response $j$.
  \end{proposition}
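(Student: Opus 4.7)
The plan is to reduce the claim to a standard Sherman-Morrison (rank-one) update. First I would unpack the definitions of the two covariance matrices. Writing $H = \sum_{j=0}^{i-1} T_j$ and letting $h_1,\ldots,h_H$ enumerate the per-token hidden representations $h_\theta(x, y_{<t}^{(j)})$ aggregated across all already-generated responses, the non-mean-centered covariance is $\tilde{\Sigma}_{(i)} = \lambda I + \sum_{k=1}^{H} h_k h_k^{\top}$ (for whatever regularizer $\lambda$ is being used, possibly $0$), while the mean-centered covariance is $\Sigma_{(i)} = \lambda I + \sum_{k=1}^{H} (h_k - \mu^{(i)})(h_k - \mu^{(i)})^{\top}$, with $\mu^{(i)} = \frac{1}{H}\sum_k h_k$.

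Next I would expand the mean-centered sum and use the definition of $\mu^{(i)}$ to cancel cross terms. A short computation gives
\begin{equation*}
\sum_{k=1}^{H} (h_k - \mu^{(i)})(h_k - \mu^{(i)})^{\top} = \sum_{k=1}^{H} h_k h_k^{\top} - H\,\mu^{(i)}{\mu^{(i)}}^{\top},
\end{equation*}
so that $\Sigma_{(i)} = \tilde{\Sigma}_{(i)} - H\,\mu^{(i)}{\mu^{(i)}}^{\top}$. This exhibits $\Sigma_{(i)}$ as a rank-one modification of $\tilde{\Sigma}_{(i)}$, which is precisely the setting where Sherman-Morrison applies.

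Finally, I would invoke Sherman-Morrison with $A = \tilde{\Sigma}_{(i)}$, $u = -H\mu^{(i)}$, and $v = \mu^{(i)}$, giving
\begin{equation*}
\Sigma_{(i)}^{-1} = \tilde{\Sigma}_{(i)}^{-1} + \frac{H\,\tilde{\Sigma}_{(i)}^{-1}\mu^{(i)}{\mu^{(i)}}^{\top}\tilde{\Sigma}_{(i)}^{-1}}{1 - H\,{\mu^{(i)}}^{\top}\tilde{\Sigma}_{(i)}^{-1}\mu^{(i)}},
\end{equation*}
and then dividing numerator and denominator by $-H$ produces exactly the $-1/H$ in the denominator as stated in the proposition. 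There is essentially no obstacle here: the only nontrivial step is the rank-one reduction via the mean identity, and the rest is mechanical application of Sherman-Morrison. I would also briefly note the implicit assumption that the denominator is nonzero (which is automatic as long as the regularized $\Sigma_{(i)}$ is invertible, e.g., $\lambda > 0$), so that the update is well-defined.
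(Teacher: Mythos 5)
Your proposal is correct and follows essentially the same route as the paper: expand the mean-centered sum to obtain $\Sigma_{(i)} = \tilde{\Sigma}_{(i)} - H\,\mu^{(i)}{\mu^{(i)}}^{\top}$, then invert the rank-one update (the paper invokes Woodbury with $U=\mu^{(i)}$, $C=-H$, $V={\mu^{(i)}}^{\top}$, which is the same computation as your Sherman--Morrison step after your rescaling by $-H$). Your added remarks about the regularizer and the nonvanishing denominator are harmless refinements of the same argument.
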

  \begin{proof}[\pfref{prop:rank_one}]We can write
      \begin{align*}
          \Sigma^{(i)} &= \sum_{j = 0}^{i - 1}\sum_{t = 1}^{T_j} (h^{(j)}_t - \mu^{(i)}_t)(h_t^{(j)} - \mu^{(i)}_t)^T \\
          &= \sum_{j = 0}^{i - 1}\sum_{t = 1}^{T_j} h^{(j)}_t (h^{(j)}_t)^T - \sum_{j = 0}^{i - 1}\sum_{t = 1}^{T_j} h^{(j)}_t(\mu^{(i)})^T - \sum_{j = 0}^{i - 1}\sum_{t = 1}^{T_j} \mu^{(i)} (h_t^{(j)})^T + \sum_{j = 0}^{i - 1}\sum_{t = 1}^{T_j} \mu^{(i)} (\mu^{(i)})^T \\
          &= \tilde{\Sigma}^{(i)} - \left(\sum_{j = 0}^{i - 1}\sum_{t = 1}^{T_j} h_t^{(j)}\right)(\mu^{(i)})^T - \mu^{(i)} \left(\sum_{j = 0}^{i - 1}\sum_{t = 1}^{T_j} (h_t^{(j)})^T\right) + \sum_{j = 0}^{i - 1}\sum_{t = 1}^{T_j}\mu^{(i)}(\mu^{(i)})^T \\
          &= \tilde{\Sigma}^{(i)} - H\mu^{(i)}(\mu^{(i)})^T - \mu^{(i)}H(\mu^{(i)})^T + H\mu^{(i)}(\mu^{(i)})^T \\
          &= \tilde{\Sigma}^{(i)} - 2H\mu^{(i)}(\mu^{(i)})^T + H\mu^{(i)}(\mu^{(i)})^T \\
          &= \tilde{\Sigma}^{(i)} - H\mu^{(i)}(\mu^{(i)})^T.
      \end{align*}
      Inverting $\Sigma^{(i)}$, we conclude that
      \begin{align*}
          \Sigma_{(i)}^{-1} &= \left(\tilde{\Sigma}^{(i)} - H\mu^{(i)}\mu_{(i)}^T\right)^{-1} \\
          &= \tilde{\Sigma}_{(i)}^{-1} - \left(\frac{\tilde{\Sigma}_{(i)}^{-1}\mu_{(i)}\mu_{(i)}^T\tilde{\Sigma}_{(i)}^{-1}}{-1/H + \mu_{(i)}^T\tilde{\Sigma}_{(i)}^{-1}\mu^{(i)}}\right),
      \end{align*}
      where we used the Woodbury matrix identity lemma\footnote{\url{https://en.wikipedia.org/wiki/Woodbury_matrix_identity}} in the last step with $U = \mu^{(i)}$, $C = -H$, and $V = \mu_{(i)}^T$.
  \end{proof}

We exclude representations from the current sequence $i$ in $\Sigma^{(i)}$ to keep the generation from veering off topic, and update $\Sigma^{-1}_{(i)}$ after each generation $i$ using $T_i$ consecutive applications of the Sherman-Morrison update in \cref{alg:elliptic_coreset}, one for each hidden representation of the output tokens. Finally, we found it necessary for numerical stability to perform all covariance related computations in double precision.

\paragraph{Additional plots} In~\cref{fig:token_level_full}, we provide a revised version of~\cref{fig:token_level} where we add shaded areas indicating one standard error to the left plot, and we additionally add $\beta = 0.5$ to the right plot.

\iclr{
\begin{figure}[t]
    \centering
    \includegraphics[width=\linewidth]{figures/token_level_cdf_and_solve_rate_by_hardness_full.pdf}
    \vspace{-0.5cm}
    \caption{\textbf{Representation-based exploration at the token level}, compared to naive autoregressive generation ($\beta=0$) for inference-time exploration. \textbf{\textit{(Left)}} We added shaded areas indicating one standard error to~\cref{fig:token_level}. \textbf{\textit{(Right)}} When binning the questions by hardness (judged by samples-to-correct for \texttt{GPT-4o-mini}), solve rate improves the most on the hardest questions. Error bars indicate 95\% paired bootstrap CIs.}
    \label{fig:token_level_full}
\end{figure}
}

\section{Details for RL Post-Training Experiments (\cref{sec:rl})}
\label{sec:appendix_rl}

\paragraph{Hyperparameters} We use verl for training~\citep{sheng2024hybridflow}, and  provide a full overview of all common hyperparameters in~\cref{tab:rl_hyperparameters}, all hyperparameters specific to unlikeliness in~\cref{tab:unlikeliness_hyperparameters}, and all hyperpameters specific to \repexp in~\cref{tab:elliptical_hyperparameters}. Note that for \texttt{AIME 2024}, we adjusted the maximum prompt length to 2048, the maximum response length to 8192, the train batch size to 512, the ppo mini batch size to 128, and the ppo micro batch size per gpu to 8. Also, for \repexp on \texttt{AIME 2024}, we increased the sparse projection dimension from 32 to 128.

\paragraph{Algorithm details} We note that we mean-center the representations $\bar{h}_\theta$ that are used to compute the elliptic bonuses as described in~\cref{sec:representation}, where the mean is taken over all the response-level representations of the current group of rollouts for a fixed prompt $x$. In addition, we do not add a bonus for questions where \emph{all} rollouts in the batch are incorrect, as we found this to empirically hurt performance.

\paragraph{Dataset splits} For \texttt{MATH}, we use the original 7.5k train split for training, \texttt{MATH-500}~\citep{lightman2023lets} for validation, and the 4.5k (originally 5k minus the problems in \texttt{MATH-500})) test split for testing. For GSM8K, we use the original 8.79k train split for training, the first $512$ examples from the test split for validation, and the remaining $807$ examples from the test split for testing. Finally for \texttt{AIME 2024}, we use $4096$ examples randomly chosen from the full \texttt{DAPO-Math-17K} dataset for training and use the full \texttt{AIME 2024} dataset both for validation and testing.

\begin{figure}[t]
    \centering
        \includegraphics[width=\linewidth]{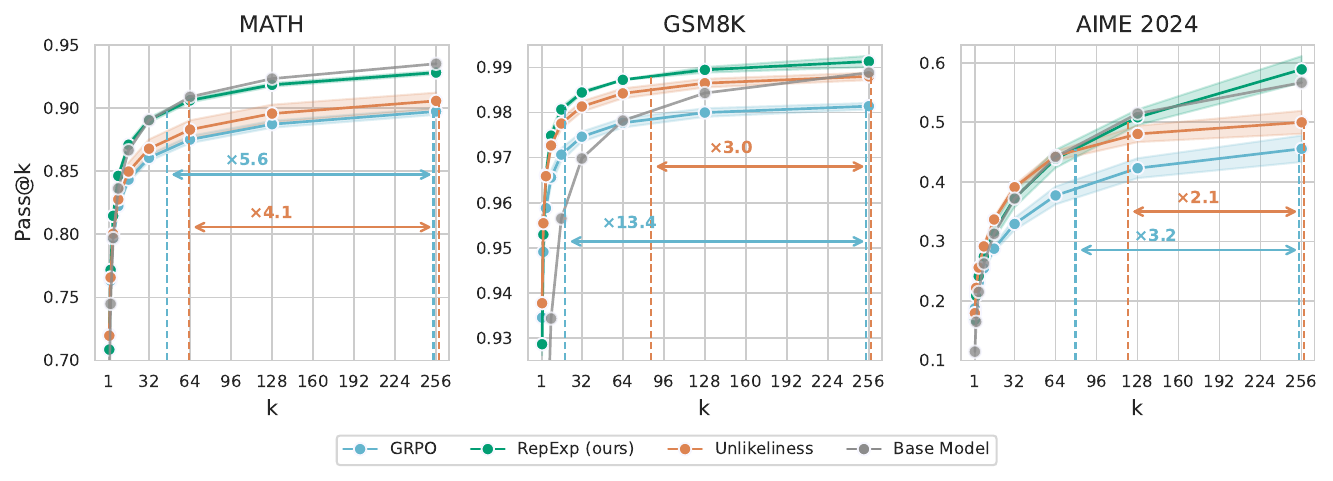}
    \vspace{-0.5cm}
    \caption{\textbf{Pass@k for RL post-training with exploration.} We plot~\cref{fig:rl_results} on linear axes to provide an additional perspective.} 
    \label{fig:rl_results_linear}
\end{figure}

\begin{table}[t]
\centering
\begin{tabular}{@{}cc@{}}
\begin{minipage}{0.55\linewidth}
    \centering
    \caption{\textbf{Common RL hyperparameters.}}
    \label{tab:rl_hyperparameters}
    \begin{tabular}{ll}
        \toprule
        \textbf{Hyperparameter} & \textbf{Value} \\
        \midrule
        Learning rate & $1\text{e}{-6}$ \\
        Maximum prompt length & 1024 \\
        Maximum response length (tokens) & 1024 \\
        Train batch size & 1024 \\
        PPO mini batch size & 256 \\
        PPO micro batch size per gpu & 16 \\
        PPO epochs & 1 \\
        KL loss coefficient & 0.0 \\
        GRPO group size (rollouts) & 8 \\
        Entropy coefficient & 0.0 \\
        Log prob micro batch size per gpu & 16 \\
        Tensor model parallel size & 2 \\
        Number of GPUs & 8 \\
        Validation frequency & 20 \\
        \bottomrule
    \end{tabular}
\end{minipage}
&
\begin{minipage}{0.45\linewidth}
    \centering
    \caption{\textbf{Unlikeliness hyperparameters.}}
    \label{tab:unlikeliness_hyperparameters}
    \begin{tabular}{lc}
        \toprule
        \textbf{Hyperparameter} & \textbf{Value} \\
        \midrule
        $\beta_{\text{rank}}$ & 0.25 \\
        No bonus if all rollouts correct & True \\
        \bottomrule
    \end{tabular}

    \vspace{0.6cm}

    \caption{\textbf{\repexp hyperparameters.}}
    \label{tab:elliptical_hyperparameters}
    \begin{tabular}{lc}
        \toprule
        \textbf{Hyperparameter} & \textbf{Score} \\
        \midrule
        $\beta$ & 0.01 \\
        Sparse projection dimension  & 32 \\
        No bonus if all rollouts incorrect & True \\
        \bottomrule
    \end{tabular}
\end{minipage}
\end{tabular}
\end{table}

\paragraph{Modifications to unlikeliness baseline} The original unlikeliness method from~\citet{he2025rewarding} combines the reward modification described in~\cref{sec:rl} along with several other modifications to the underlying GRPO mechanics:
\begin{itemize}
    \item They only include samples for which the resulting rollouts have nonzero advantages in the batch sent for training. Specifically, questions where either none of the rollouts are correct or all of the rollouts are correct are thrown out. To ensure batch sizes stay roughly equal, the authors implement a buffer mechanism that collects samples until the buffer reaches a target batch size.

    \item They increase the number of ppo epochs from $1$ to $2$ as they find this helps further increase the pass@k.

    \item They use a high KL penalty coefficient of 0.1 as they find this helps prevent the pass@k from decreasing.
\end{itemize}
Since our primary aim in this section is to isolate and compare the exploration mechanisms of our method with others, we leave out the additional changes described above when running the unlikeliness baseline. Furthermore, this allows us to use the exact same underlying GRPO hyperparameters for all methods, making the comparison much more clean.

\paragraph{Checkpoint picking} For each seed per method, we pick the checkpoint during training that achieves the highest pass@1 on the respective task's validation set and use the resulting checkpoint for evaluation.

\paragraph{Evaluation} We evaluate each final checkpoint (picked in the way described earlier) on the test split of each respective task by sampling 256 responses per question using vanilla sampling parameters ($\tau = 1.0$, $\text{top-p} = 1.0$). We then estimate the pass@k exactly as described in~\cref{sec:appendix_coresets}. We run $3$ seeds for all methods on all tasks and average the resulting pass@k curves. In addition, we provide an alternative view of~\cref{fig:rl_results} in~\cref{fig:rl_results_linear}.

\paragraph{Experiment resources} We run all methods on all tasks using 8 NVIDIA H100 80GB GPUs per seed per method for $1$ day.

\paragraph{Computational overhead} On \texttt{MATH}, we estimated \repexp to reach about $0.73$x the steps/hour throughput of GRPO. In other words, \repexp requires about $1.37$x higher wall-clock time per step. Note that this overhead comes from an extra forward pass that's required to compute the hidden representations of the sampled responses at each iteration of the algorithm. However, if either (1) one were to use a KL constraint (which we do not in our experiments) or (2) one were to use the hidden representations from $\pi_t$ (the current policy iterate) instead of $\pi_{\mathrm{ref}}$ (the base model), then the extra forward pass would not be needed and \repexp would have the same throughput as GRPO.

\iclr{
\section{Protein Sequence Generation}
\label{sec:appendix_protein_sequence_generation}

}
\section{Beyond Sharpening: Full Results}
\label{sec:appendix_beyond_sharpening}
\begin{figure}
    \centering
    \includegraphics[width=\linewidth]{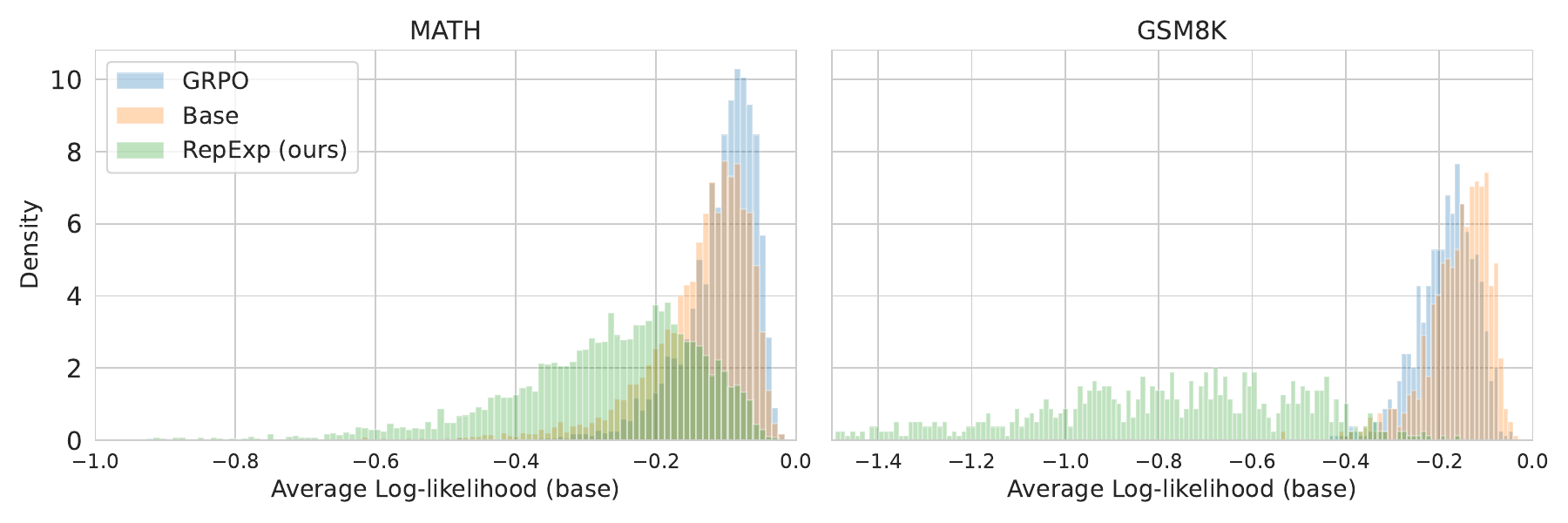}
    \caption{\textbf{Anti-sharpening behavior of \repexp.} We plot the histogram of average log-likelihoods evaluated under the base model for test responses sampled from base, GRPO, and \repexp. While the response likelihoods from GRPO either roughly stay the same or increase, the responses from \repexp are significantly more novel as indicated by the low log-likelihoods under base.
    }
    \label{fig:log_prob_histogram}
\end{figure}
In~\cref{fig:log_prob_histogram}, we provide results for both \texttt{MATH} and \texttt{GSM8K}. We find that on \texttt{GSM8K} the shift toward lower likelihood responses from \repexp as evaluated under the base model is even more dramatic.

\end{document}